\newtheorem{theorem}{Theorem}
\newtheorem{assumption}{Assumption}
\definecolor{lime}{HTML}{A6CE39}
\DeclareRobustCommand{\orcidicon}{%
	\begin{tikzpicture}
		\draw[lime, fill=lime] (0,0) 
		circle [radius=0.16] 
		node[white] {{\fontfamily{qag}\selectfont \tiny ID}};    \draw[white, fill=white] (-0.0625,0.095) 
		circle [radius=0.007];    \end{tikzpicture}
	\hspace{-2mm}}
\xdef\csname orcid\x\endcsname{\noexpand\href{https://orcid.org/\csname orcidauthor\x\endcsname}{\noexpand\orcidicon}}
\begin{document}
	
	\title{FedVeca: Federated Vectorized Averaging on Non-IID Data with Adaptive Bi-directional Global Objective}
	\author{Ping Luo\orcidA{},~\IEEEmembership{Student Member,~IEEE,} Jieren Cheng,~\IEEEmembership{Member,~IEEE,} N. Xiong\orcidC{},~\IEEEmembership{Senior Memeber,~IEEE,} Zhenhao Liu ,~\IEEEmembership{Student Member,~IEEE,} and Jie Wu\orcidD{},~\IEEEmembership{Fellow,~IEEE}
	\thanks{This paper (excluding appendices) has been accepted for publication in the IEEE Transactions on Parallel and Distributed Systems, DOI:\href{https://ieeexplore.ieee.org/document/10664503}{10.1109/TPDS.2024.3454203}. \textit{(Corresponding author: Jieren Cheng.)}}
	\thanks{Ping Luo, Jieren Cheng, N. Xiong, and Zhenhao Liu are with the Hainan Blockchain Technology Engineering Research Center, Haikou, 570228, China (e-mail: \href{luoping@hainanu.edu.cn}{luoping@hainanu.edu.cn}; \href{cjr22@163.com}{cjr22@163.com};  \href{nnxiong10@yahoo.com}{nnxiong10@yahoo.com}
	\href{lzhcs11@163.com}{lzhcs11@163.com};).}
	\thanks{Jie Wu is with the Department of Computer and Information Sciences Temple University SERC 362 1925 N. 12th Street Philadelphia, PA 19122 (e-mail: \href{jiewu@temple.edu}{jiewu@temple.edu}).}
}
		
	\markboth{Journal of \LaTeX\ Class Files,~Vol.~14, No.~8, August~2021}%
	{Shell \MakeLowercase{\textit{et al.}}: A Sample Article Using IEEEtran.cls for IEEE Journals}
		
	\IEEEpubid{0000--0000/00\$00.00~\copyright~2021 IEEE}
		
	\maketitle

\begin{abstract}
Federated Learning (FL) is a distributed machine learning framework in parallel and distributed systems. However, the systems' Non-Independent and Identically Distributed (Non-IID) data negatively affect the communication efficiency, since clients with different datasets may cause significant gaps to the local gradients in each communication round. In this paper, we propose a Federated Vectorized Averaging (FedVeca) method to optimize the FL communication system on Non-IID data. Specifically, we set a novel objective for the global model which is related to the local gradients. The local gradient is defined as a bi-directional vector with step size and direction, where the step size is the number of local updates and the direction is divided into positive and negative according to our definition. In FedVeca, the direction is influenced by the step size, thus we average the bi-directional vectors to reduce the effect of different step sizes. Then, we theoretically analyze the relationship between the step sizes and the global objective, and obtain upper bounds on the step sizes per communication round. Based on the upper bounds, we design an algorithm for the server and the client to adaptively adjusts the step sizes that make the objective close to the optimum. Finally, we conduct experiments on different datasets, models and scenarios by building a prototype system, and the experimental results demonstrate the effectiveness and efficiency of the FedVeca method.
\end{abstract}

\begin{IEEEkeywords}
Federated learning, machine learning, Non-IID data, optimization.
\end{IEEEkeywords}

\section{Introduction}
\IEEEPARstart{P}arallel and distributed systems have transformed computing by allowing tasks to be executed simultaneously across multiple nodes, enhancing performance and scalability in high-performance computing environments, data centers, and cloud services \cite{gu2020towards}. With the rise of big data and artificial intelligence, these systems have become crucial for distributed machine learning frameworks that handle vast datasets across different locations while maintaining efficiency \cite{jiang2021parallel}. For real-world applications of distributed machine learning, considerations must be made for data silos resulting from privacy concerns, leading to the emergence of Federated Learning (FL) as an extension of parallel and distributed systems \cite{mcmahan2017communication}.

FL allows decentralized clients to train a global model using their own local data, thus alleviating the problems of data silos and user privacy \cite{konevcny2016federated, liu2022distributed}. FL local training generally adopts the optimization method of Stochastic Gradient Descent (SGD) \cite{wang2021cooperative}, which requires independent and identically distributed (IID) data to ensure unbiased estimation of global gradient in the training process \cite{dvurechensky2018computational, lei2019stochastic, harvey2019tight}. Since the clients' local data are obtained from the local environment and usage habits, the generated datasets are usually Non-IID due to differences in size and distribution \cite{aledhari2020federated,zhang2021federated2,gao2022feddc}. The Non-IID datasets and multiple local SGD iterations will bring a drift to the global model in each communication round, which significantly reduces the FL performance and stability in the training process, thus requiring more communication rounds to converge or even fail to converge \cite{sattler2019robust, luo2021no, zhang2021federated}. Therefore, it becomes a key challenge to optimize the FL objective by reducing the negative impact of Non-IID data.

\begin{figure}[!t]
	\centering
	\includegraphics[width=3.5in]{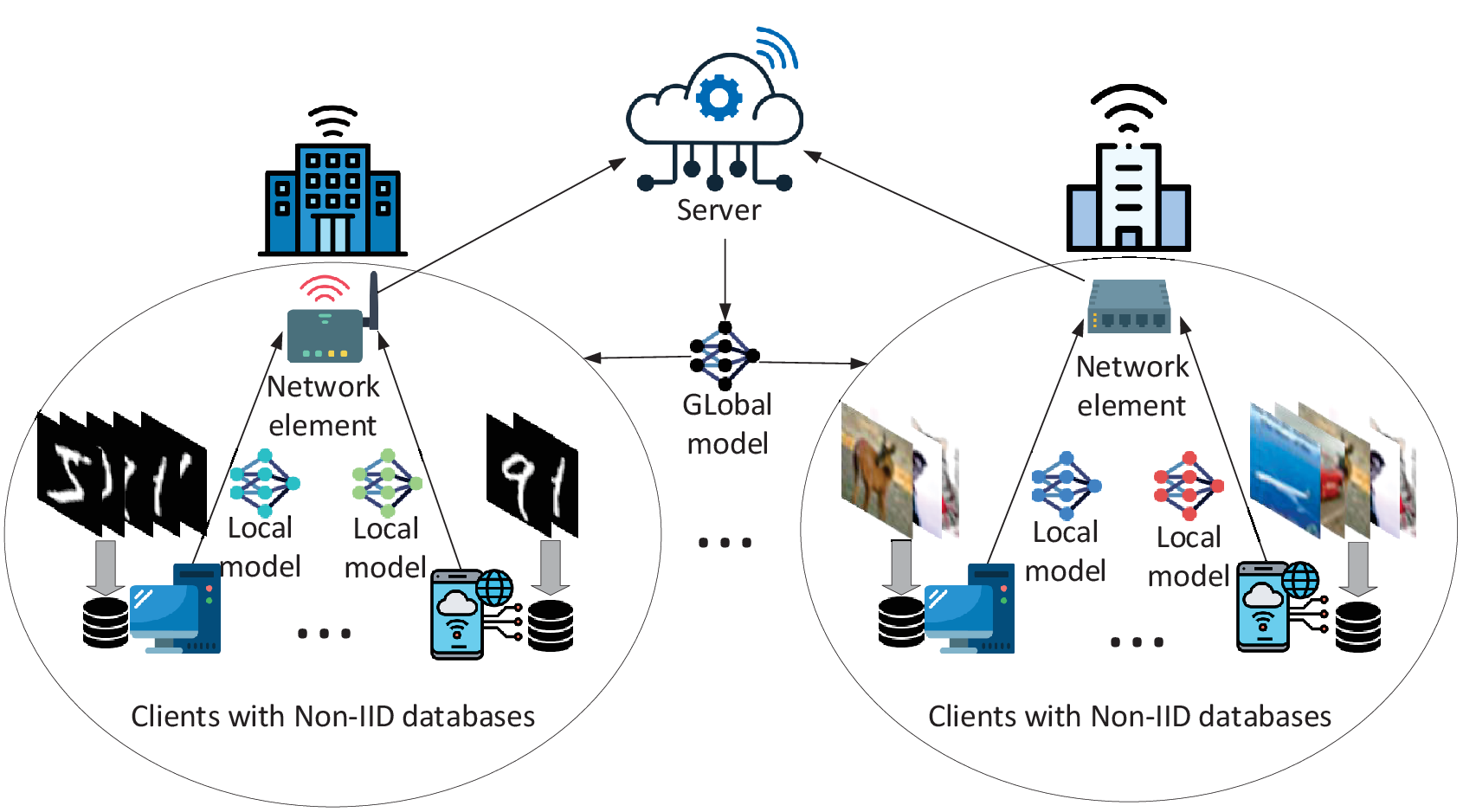}
	\caption{A typical FL communication networks system.}
	\label{system}
\end{figure}

\IEEEpubidadjcol

As illustrated in Fig. \ref{system}, we consider a typical Federated Averaging Algorithm (FedAvg) \cite{mcmahan2017communication} communication networks system, where clients are composed of communication-connected devices with local Non-IID training databases. Intuitively, the negative impact is due to the fact that the client has Non-IID data, and the common strategy is to adjust the data distribution from an optimization perspective of the data, thus effectively improving the generalization performance of the global model \cite{zhu2021federated}. Data distribution is usually a fundamental property of FL that is not easily changed, and a more easily tuned parameter is the number of local SGD iterations as a way of counteracting global objective drift \cite{wang2019adaptive}. Furthermore, our previous research has revealed that the number of local SGD iterations correlates with the Non-IID nature of datasets on individual clients \cite{cheng2022aafl}. Hence, we proceeded to investigate its viability within a more sophisticated FL framework, namely, Normalized Averaging Federated learning (FedNova) \cite{wang2020tackling}.

In this paper, we propose a Federated Vectorized Averaging (FedVeca) method to make the global model close to the optimal global model on Non-IID data. Specifically, we define the local gradients as a bi-directional vectors with positive and negative (two categories of directions). Positive vectors represent local gradients that exhibit less deviation from the global gradient vector, while negative vectors indicate a greater disparity. The visual representation of the positive and negative directions of the bi-directional vector is provided in Fig. \ref{problem}, with the analysis detailed in Section \ref{goal}. The standard quantization and classification of these vectors are detailed in Theorem \ref{theorem-2} of Section \ref{sec-analysis}. In addition to the direction, we define the step size for the bi-directional vector as the number of local SGD iterations. Then, we adaptively adjust the step size of the averaged bi-directional vector at each client to optimize global model. As shown in Fig. \ref{architecture}, the FedVeca method starts with the selection of the dataset and the construction of the model. Next, the global model is distributed from the server to the individual clients and then trained on the local dataset $ D_{1 \sim 3} $ to obtain the local gradients and control parameters. Local gradients are weighted and aggregated into the global gradient for the global model update, and the control parameters act on the averaged bi-directional vector to optimize the local step size $ \tau_{1 \sim 3} $ of the clients. Finally, the updated global model and the local step size are sent to the individual clients for the next round of updates.

\begin{figure*}[!t]
	\centering
	\includegraphics[width=6.0in]{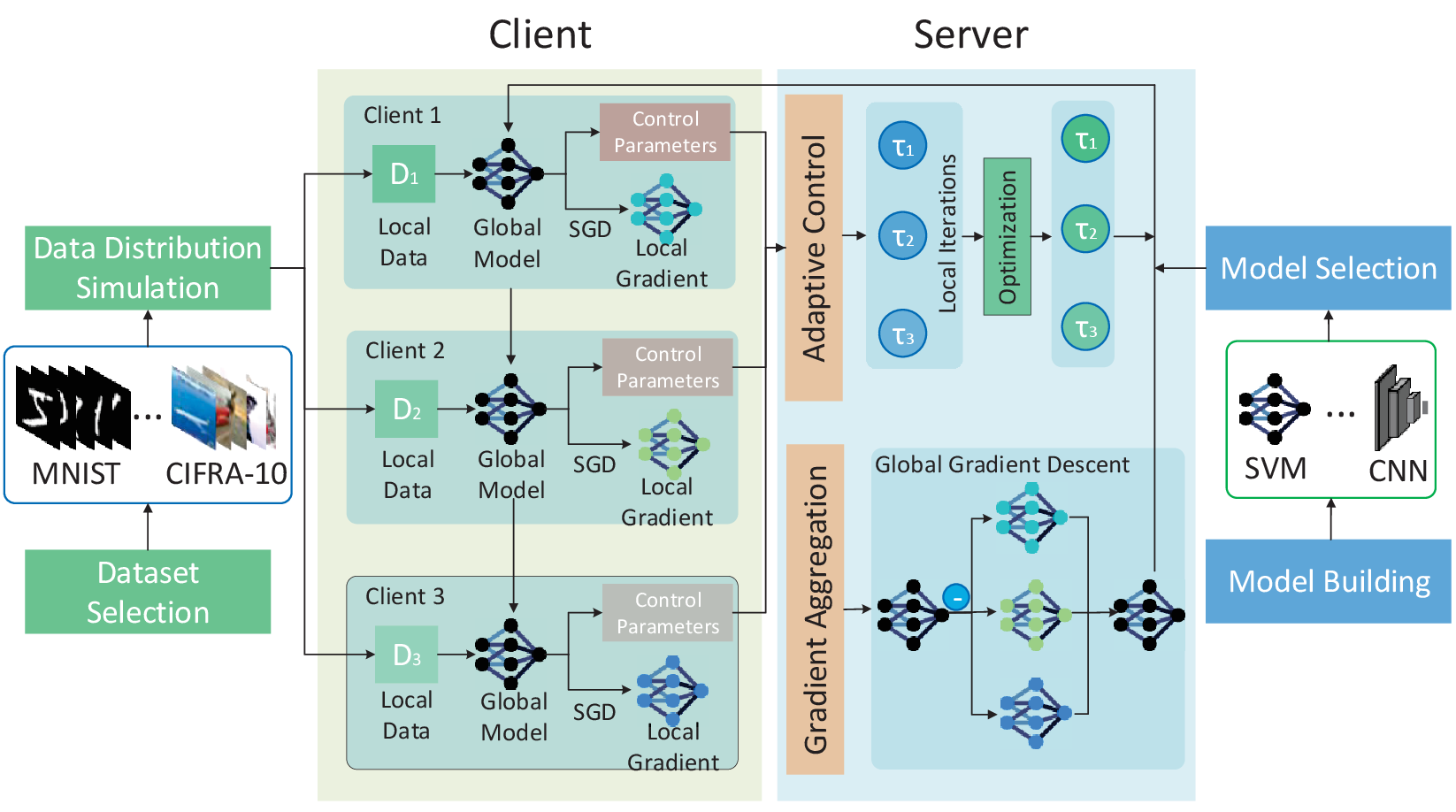}
	\caption{The overall architecture of proposed Federated Vectorized Averaging (FedVeca) method on Non-IID data.}
	\label{architecture}
\end{figure*}

The main contributions of this paper are as follows:
\begin{itemize}
	\item	We set a novel objective for the global model of FedVeca. In a qualitative analysis of the relationship between the global objective and the bi-directional vector's step size, we obtained an upper bound on the step size of each client in each communication round.
	\item	Using the above theoretical upper bound, we propose an algorithm for FedVeca that adaptively balances the step size at each client to accelerate the speed of the global model convergence.
	\item	We evaluate the general performance of the proposed algorithm via extensive experiments on general public datasets, which confirms that the FedVeca algorithm provides efficient performance in the Non-IID datasets.
\end{itemize}

This paper is organized as follows. Related work is introduced in Section \ref{sec-related work}. The basics and definitions of FL are summarized in Section \ref{sec-preliminaries}. The convergence analysis and control algorithm are presented in Sections \ref{sec-analysis}. Experimentation results are shown in Section \ref{sec-experiment} and the conclusion is presented in Section \ref{sec-conclusion}.

\section{Related Work}
\label{sec-related work}
Based on the strategy of adjusting the distribution of dataset on FL clients, there is a mean-augmented FL strategy \cite{yoon2020fedmix} which is inspired by data augmentation methods \cite{zhang2018mixup} and allows each client to average the data after exchanging updated model parameters. And the averaged data are sent to each client and used to reduce the degree of local data distribution imbalance. Moreover, other related studies treat each client as a domain, and data augmentation is applied to each client's data by selecting transformations related to the overall distribution to generate similar data distributions \cite{zhou2022domain, back2022mitigating}. Rather than modifying data directly, some schemes indirectly improve data distribution through client-side selection. For example, the FL process can be stabilized and sped up by describing the data distribution on the client through the uploaded model parameters, thus the best subsets of clients are intelligently selected in each communication round  \cite{wang2020optimizing}. Similarly, there is a scheme to evaluate the class distribution without knowing the original data, which uses a client selection strategy oriented towards minimizing class imbalance \cite{yang2021federated}. And the method proposed in \cite{zhao2018federated} creates a globally shared subset of data to obtain better learning performance in the case of Non-IID data distributions. Besides, there are some schemes that share part of local data to the central server for mitigating the negative impact of Non-IID data \cite{tuor2021overcoming, yoshida2019hybrid}.

In addition to the optimization of FL strategies by improving data distribution, the broader focus aims at adapting the model to local tasks. For adapting the model, some methods create a better initial model by local fine-tuning. The scheme proposed in \cite{fallah2020personalized} makes the local model have a better initial global model by using Model-Agnostic Meta-Learning (MAML) \cite{finn2017model}. As an extension, \cite{t2020personalized} proposes a federated meta-learning formulation using Moreau envelopes. Besides, Multi-tasking is proposed to solve statistical challenges in FL environments by finding relationships between clients' data such that similar clients learn similar models \cite{smith2017federated, corinzia2019variational, chen2018federated}. With the idea of extracting information from large models to small models, knowledge distillation has also been generalized to optimization strategies for FL \cite{zhu2021data}. For example, a distillation framework for model fusion with robustness is proposed in \cite{lin2020ensemble}, using unlabeled data output from client models to train a central classifier that flexibly aggregates heterogeneous models of clients. \cite{peng2019federated} proposed a domain-adaptive federated optimization method that aligns the learned representations of different clients with the data distribution of the target nodes and uses feature decomposition to enhance knowledge transfer.

The above algorithms are based on the same number of local SGD iterations per client in the training process, and we abbreviate this setup as synchronous FL in the paper. In synchronous FL optimization, the training time always depends on the slowest training node \cite{wang2020tackling}. Besides, some clients are unable to complete the training tasks within the specified time, and these clients will be discarded under normal circumstances, thus affecting the performance and accuracy of the trained model \cite{mcmahan2017communication}. Since the synchronous FL does not conform to the update principles of clients in real-world environments, it is necessary for each client to have a different number of local SGD iterations. But on Non-IID data, clients with different numbers of local SGD iterations lead to a problem where the global model is biased in the direction of the local model with a higher number of local SGD iterations. To solve this problem, Proximal Federated Learning (FedProx) \cite{li2020federated} introduces a regularization term in the local loss function to adjust the distance from the global model, so that there is no need to manually adjust the number of local SGD iterations. Moreover, \cite{liang2019variance} proposed a local SGD with reduced variance and can further reduce the communication complexity by removing the dependence of different clients on gradient variance, which has a significant performance on Non-IID datasets. Further, Stochastic Controlled Federated Learning (SCAFFOLD) \cite{karimireddy2020scaffold} corrects the direction of global model training by using a control variable (reduction variance), that solves the client-drift problem caused by local SGD iterations on Non-IID datasets. In the FedNova scheme \cite{wang2020tackling}, a regularization is introduced based on the contribution of the local gradients, limiting the bias problem for the global model caused by excessive local SGD iterations.

Building on the FedNova framework, our proposed FedVeca method introduces a strategy to adaptively regulate the local step size, which is contingent on the positive degree of the local gradient at each node. This strategy aims to modulate both the direction and magnitude of the local gradient vector. In the subsequent section, we formulate the training procedure and objectives of FedVeca, providing a proof of convergence along with the corresponding algorithm.

\section{Preliminaries and Definitions}
\label{sec-preliminaries}
In this section, we introduce some related preliminaries and our definitions. For convenience, we assume that all vectors are column vectors in this paper and use $ \boldsymbol{w}^{T} $ to denote the transpose of $ \boldsymbol{w} $. We use ``$ \triangleq $" to denote ``is defined to be equal to" and use $ \left\|.\right\| $ to denote the $ L $ norm.
 \subsection{Generalized Update Rules of FL}
 \label{update rule}
\begin{figure}[!t]
	\centering
	\includegraphics[width=3.5in]{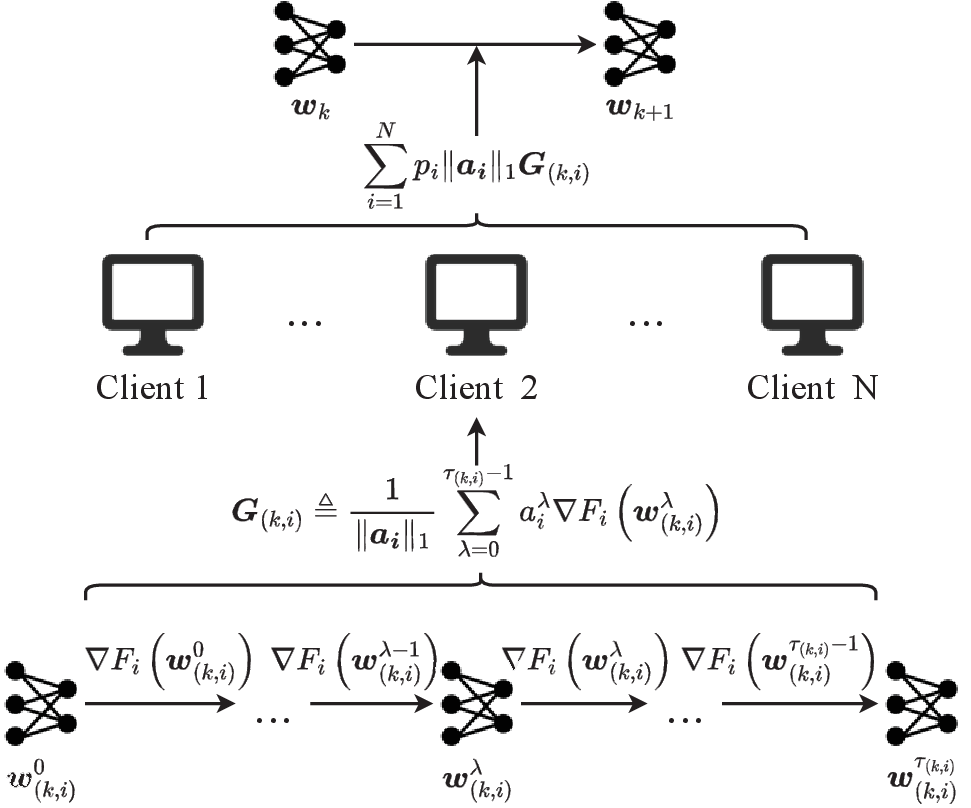}
	\caption{Generalized update rules in the $ k $-th round.}
	\label{fedavg}
\end{figure}
Assume that we have $ N $ clients with local train datasets $ \mathcal{D}_{1}, \mathcal{D}_{2}, ..., \mathcal{D}_{i}, ..., \mathcal{D}_{N} $, where $ i $ denotes the client index. FL performs the training process on these local train datasets under generalized update rules \cite{wang2020tackling}. In the generalized update rules, the training process has $ K $ ($ K \geq 1 $) communication rounds and the server has global model parameters $ \boldsymbol{w}_{k} $, where $ k = 0,1,2, ..., K $ denotes the round index. In $ k $ round, each client $ i $ has $ \tau_{(k,i)} $ ($ \tau_{(k,i)} \geq 1 $) local SGD iterations  and its local model parameters $ \boldsymbol{w}_{(k,i)}^{\lambda} $, where $ \lambda = 0,1,2,..., \tau_{(k,i)} $ denotes the local SGD iteration index. When FL begins ($ k=0 $), the server initializes the global model parameters $ \boldsymbol{w}_{0} $ and sends it to all clients. At $ \lambda=0 $, the local model parameters for all clients are received from the server that $ \boldsymbol{w}_{(k,i)}^{0} = \boldsymbol{w}_{k} $. For $ 0 \leq \lambda \leq \tau_{(k,i)} - 1 $, the gradients $ \nabla F_{i}\big(\boldsymbol{w}_{(k,i)}^{\lambda}\big) $ are computed according to the local loss function $ F_{i}\left(\boldsymbol{w}\right) $ and the local model parameters $ \boldsymbol{w}_{(k,i)}^{\lambda} $, and $ \boldsymbol{w}_{(k,i)}^{\lambda} $ are updated by local SGD update rule which is
\begin{equation}
	\label{local update}
	\boldsymbol{w}_{(k,i)}^{\lambda+1} - \boldsymbol{w}_{(k,i)}^{\lambda} = - \eta \nabla F_{i}\big(\boldsymbol{w}_{(k,i)}^{\lambda}\big),
\end{equation}
where $ \eta > 0 $ denotes the learning rate which is used in SGD. In this paper, $ \eta $ is fixed with a pre-specified value and is equal across all clients.

For each node $ i $, the total gradient on the collection of data samples at this node is 
\begin{equation}
	\label{local-grad}
	\boldsymbol{G}_{(k,i)} \triangleq \frac{1}{\|\boldsymbol{a_{i}}\|_{1}} \sum_{\lambda=0}^{\tau_{(k,i)}-1} a_{i}^{\lambda} \nabla F_{i}\big(\boldsymbol{w}_{(k,i)}^{\lambda}\big),
\end{equation}
where $ \boldsymbol{a}_{i} \in \mathbb{R}^{\tau_{(k,i)}}$ is a non-negative vector and defines how stochastic gradients are locally accumulated, and $ a_{i}^{\lambda} $ is the element with index $ \lambda $ in the vector $ \boldsymbol{a}_{i} $. Under this rule, $ \boldsymbol{G}_{(k,i)} $ denotes a locally normalized gradient at node $ i $.

After one or multiple local SGD iterations, a global step is performed through the parameter server to update the global model parameters, which is
\begin{equation}
	\label{global update}
	\boldsymbol{w}_{k+1} - \boldsymbol{w}_{k} = -\eta \sum_{i=1}^{N} p_{i} \|\boldsymbol{a_{i}}\|_{1} \boldsymbol{G}_{(k,i)}.
\end{equation}
We define $ D_{i} \triangleq \left| \mathcal{D}_{i} \right| $ and $ D \triangleq \sum_{i=1}^{N} D_{i} $, where $ \left| . \right| $ denotes the size of the set, and the weight $ p_i $ is equal to $ D_{i} / D $.

At the end of each round $ k=0,1,2,...,K $, the aggregator estimates the global loss function that we define
\begin{equation}
	\label{eq-global-loss}		
	F\left(\boldsymbol{w}_{k+1}\right) \triangleq \sum_{i=1}^{N} p_{i} F_{i}\big(\boldsymbol{w}_{k}^{i, \lambda=\tau_{i}}\big),
\end{equation}
where $ F_{i}\big(\boldsymbol{w}_{k}^{i, \lambda=\tau_{i}}\big) $ is local loss function which is computed from $ \boldsymbol{w}_{k}^{i, \lambda=\tau_{i}} $ and local training dataset at work node $ i $. It is important to note that the loss function values estimated during the training phase serve merely as references. Accurate assessment of the loss function should be performed directly on the test dataset.

We define that each round $ k $ includes one global update step and $ \tau_{(k,i)} $ local SGD iterations at each node $ i $. In the $ k $-th round, the generalized update rules are shown in Fig. \ref{fedavg}. Later in this section, we will present two algorithms that are based on these general update rules.

\subsection{FedAvg and FedNova Algorithms}
In the FedAvg and FedNova algorithms, each client performs $ E $ epochs (traversals of their local dataset) of local SGD with a mini-batch size $ B $. Thus, if a client has $ D_{i} $ local data samples, the number of local SGD iterations is $ \tau_{(k,i)} = \lfloor E(D_{i}/B) \rfloor $, which can vary widely across clients.
\subsubsection{The FedAvg Algorithm} 
The FedAvg algorithm provided the basic principles for FL in the Non-IID datasets. As mentioned in \cite{wang2019adaptive}, SGD can be seen as an approximation to Deterministic Gradient Descent (DGD). Therefore, for convergence analysis of federated optimization, it is generally assumed that the number of local updates is the same across all clients (that is, $ \tau_{(k,i)} = \tau $ for all clients i).

According to the definition in Section \ref{update rule}, FedAvg has $ \boldsymbol{a}_{i} = [1,1,...,1] $ and $ \|\boldsymbol{a}_{i}\|_{1} = \tau $, the update rule specializes \eqref{local-grad} and \eqref{global update} and is as follows:
\begin{equation}
	\label{eq-fedavg}
	\left\lbrace \begin{aligned}  
		&\boldsymbol{G}_{(k,i)} \triangleq  \sum_{\lambda=0}^{\tau-1} \nabla F_{i}\big(\boldsymbol{w}_{(k,i)}^{\lambda}\big), \\
		&\boldsymbol{w}_{k+1} - \boldsymbol{w}_{k} = -\eta \sum_{i=1}^{N} p_{i} \boldsymbol{G}_{(k,i)}.
	\end{aligned}\right.
\end{equation}
Therefore, FedAvg is a simplified special case of the generalized update rules. And next, we will introduce an algorithm that improves on the generalized update rules.
\subsubsection{The FedNova Algorithm}
\label{fednova-alg}
The FedNova algorithm has the same definition of $ \boldsymbol{a}_{i}  = [1,1,...,1] $ as the FedAvg algorithm, but FedNova does not impose a special constraint on the number of local SGD iterations, thus $ \|\boldsymbol{a_{i}}\|_{1} = \tau_{(k,i)} = \lfloor E(D_{i}/B) \rfloor $. FedNova still uses the local SGD update rule of \eqref{local update}, the locally normalized gradient of \eqref{local-grad} and the global aggregation rule is as follows:
\begin{equation}
	\label{eq-fednova}
	\left\lbrace \begin{aligned}
		&\boldsymbol{G}_{(k,i)} \triangleq \frac{1}{\tau_{(k,i)}} \sum_{\lambda=0}^{\tau_{(k,i)}-1} \nabla F_{i}\big(\boldsymbol{w}_{(k,i)}^{\lambda}\big),\\
	    &\boldsymbol{w}_{k+1} - \boldsymbol{w}_{k} = -\eta \tau_{k} \boldsymbol{d}_{k},
	\end{aligned}\right.
\end{equation}
where $ \tau_{k} \triangleq \sum_{i=1}^{N} p_{i} \tau_{(k,i)} $ is the aggregated value of the number of SGD iterations on each client and $ \boldsymbol{d}_{k} \triangleq \sum_{i=1}^{N} p_{i} \boldsymbol{G}_{(k,i)} $ is the normalized averaging gradient.

Equation \eqref{eq-fednova} can be seen as a decomposition of equation \eqref{global update}, where $ -\boldsymbol{d}_{k} $ denotes the direction of global gradient descent and $ \eta \tau_{k} $ denotes the step size of global gradient descent.

\subsection{Learning Objective and FedVeca Method}
\label{goal}
\begin{figure}[!t]
	\centering
	\includegraphics[width=2.5in]{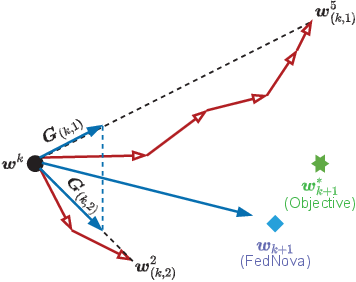}
	\caption{Illustrative diagram of the learning problem in the $ k $-th round.}
	\label{problem}
\end{figure}
In FL, for each global model $ \boldsymbol{w}_{k} $ there is its corresponding loss function $ F(\boldsymbol{w}_{k}) $. The objective of learning is to find optimal global model $ \boldsymbol{w}_{K}^{*} $ to minimize $ F(\boldsymbol{w}_{K}^{*}) $
\begin{equation}
	\boldsymbol{w}_{K}^{*} \triangleq \arg\min  F(\boldsymbol{w}_{K}),
\end{equation}
thus the value of $ F(\boldsymbol{w}_{k}) $ should decrease as $ k $ increases. In the $ k $-th round, we simplify this objective to find an optimal global model $ \boldsymbol{w}_{k+1}^{*} $ that is
\begin{equation}
	\label{our-goal}
	\boldsymbol{w}_{k+1}^{*} \triangleq \arg\min \left\lbrace F(\boldsymbol{w}_{k+1}^{*}) - F(\boldsymbol{w}_{k})\right\rbrace.
\end{equation}

In FedVeca, we use the update rule \eqref{eq-fednova} of FedNova and define the local gradient $ \boldsymbol{G}_{(k,i)} $ as an averaged bi-directional vector whose step size is the value of $ \tau_{(k,i)} $. In addition to that, we allow a heterogeneous value of step size at each client and analyze the relationship between our objective $ \boldsymbol{w}_{k+1}^{*} $ and FedNova's global model $ \boldsymbol{w}_{k+1} $. As shown in Fig. \ref{problem}, we assume that there are two clients, the Client 1 with 5 local SGD iterations (step size of 5) and the Client 2 with 2 local SGD iterations (steo size of 2). On these two clients, the local gradient $ \boldsymbol{G}_{(k,1)} $ on Client 1 is more closely aligned with the direction of $ \boldsymbol{w}_{k} - \boldsymbol{w}_{k+1}^{*} $, making it a positive vector. In contrast, the local gradient on Client 2 deviates more significantly from the direction of $ \boldsymbol{w}_{k} - \boldsymbol{w}_{k+1}^{*} $, making it a negative vector. The mathematical quantification underlying the classification of these two types of vectors will be presented in detail in Theorem \ref{theorem-2} of Section \ref{conv-ana}. The direction of the global gradient $ \boldsymbol{w}_{k} - \boldsymbol{w}_{k+1} $ is then determined by the sum of these two vectors.

In order to make the global model $ \boldsymbol{w}_{k+1} $ close to our objective $ \boldsymbol{w}_{k+1}^{*} $, two factors need to be controlled: direction and step size of the global gradient. According to the definition of $ d_{k} $ and $ \tau_{k} $ in Section \ref{fednova-alg}, we know that the averaged bi-directional vector $ \boldsymbol{G}_{(k,i)} $ and it's step size $ \tau_{(k,i)} $ control the direction and step size of the global gradient, respectively. Moreover, we can see in \eqref{eq-fednova} that there is a certain connection between $ \boldsymbol{G}_{(k,i)} $ and $ \tau_{(k,i)} $, thus a certain connection between the global model $ \boldsymbol{w}_{k+1} $ and the bi-directional vector's step size $ \tau_{(k,i)} $.

We set that $ \tau_{(k,i)} $ satisfy $ \tau_{(k,i)} \neq \lfloor E(D_{i}/B) \rfloor $ and can be different in each round $ k $ and at each client $ i $. Therefore, a natural question is how to determine the optimal values of $ \tau_{(k,i)} $ at each node $ i $, so that to get the global objective $ \boldsymbol{w}_{k+1}^{*} $. Based on this, if we want to get the optimal $ \boldsymbol{w}_{K}^{*} $, we need adaptive control of $ \tau_{(k,i)} $ for each round $ k $. And next, we will describe how our FedVeca algorithm achieves this objective.

\section{Convergence Analysis and FedVeca Algorithm}
\label{sec-analysis}
In this section, we first analyze the convergence of FedVeca and get an upper bound of $ \tau_{(k,i)} $. Then, we use this upper bound to design our algorithm to adaptively control the value of  $ \tau_{(k,i)} $ in each round $ k $ and at each word node $ i $.
\subsection{Convergence Analysis}
\label{conv-ana}
We analyze the convergence of FedVeca in this subsection and find an upper bound of $ F(\boldsymbol{w}_{k+1}) - F(\boldsymbol{w}_{k}) $. To facilitate the analysis, we first introduce the definition of the global gradient $ \nabla F(\boldsymbol{w}_{k}) $ which is the optimal gradient of $ \boldsymbol{w}^{*}_{k+1} - \boldsymbol{w}_{k} $ on all training datasets, and we have $ \left\|\boldsymbol{d}_{k}\right\| \leq \left\|\nabla F\left(\boldsymbol{w}_{k}\right)\right\| $ by comparing \eqref{eq-fednova} and \eqref{our-goal}. However, the global gradient $ \nabla F(\boldsymbol{w}_{k}) $ can not be calculated directly from all training datasets in the $ k $-th round of FL. Thus, at the end of each round $ k=1,2,...,K $, the server estimates the last global gradient that we define
\begin{equation}
	\label{eq-global grad}		
	\nabla F(\boldsymbol{w}_{k-1}) \triangleq \sum_{i=1}^{N} p_{i} \nabla F_{i}\left(\boldsymbol{w}_{k-1}\right),
\end{equation}
where $ \nabla F_{i}\left(\boldsymbol{w}_{k-1}\right) $ is the local gradient of $ \boldsymbol{w}_{k-1} $ at client $ i $ (can be calculated directly from the local training dataset). We assume the global gradient $ \nabla F(\boldsymbol{w}_{k-1}) $ is convex and all the assumptions in this paper are as follows:
\begin{assumption}[Global Lipschitz smooth]
	\label{assumption-L}
	Each global gradient is convex and Lipschitz smooth for $ k \in [0,k-1] $, that is,
	\begin{equation}
		\| \nabla F(\boldsymbol{w}_{k+1}) - \nabla F(\boldsymbol{w}_{k}) \| \leq L \| \boldsymbol{w}_{k+1} - \boldsymbol{w}_{k} \|. 
	\end{equation}
\end{assumption}

\begin{assumption}
	\label{assumption-d}
	For $ k \in [0,k-1] $, we have
	\begin{equation}
		\left\|\boldsymbol{d}_{k}\right\| \leq \left\|\nabla F\left(\boldsymbol{w}_{k}\right)\right\|.
	\end{equation}
\end{assumption}

\begin{assumption}[Local Lipschitz smooth]
	\label{assumption-beta}
	For any local gradient $ \nabla F_{i}\left(\boldsymbol{w}_{(k,i)}^{\lambda}\right) $ and $ \lambda \in [0, \tau_{(k, i)}-1] $, there
	exist constants $ \beta_{(k,i)} \geq 0 $ such that 
	\begin{equation}
		\big\|\nabla F_{i}\left(\boldsymbol{w}_{k}\right)-\nabla F_{i}\big(\boldsymbol{w}_{(k, i)}^{\lambda}\big)\big\| \leq \beta_{(k,i)}\big\|\boldsymbol{w}_{k}-\boldsymbol{w}_{(k, i)}^{\lambda}\big\|.
	\end{equation}
\end{assumption}

\begin{assumption}[Bounded gradient error]
	\label{assumption-delta}
	For all local gradients, $ s \in [0, \lambda] $ and $ \lambda \in [1, \tau_{(k, i)}-1] $, there
	exist constants $ \delta_{(k,i)} \geq 0 $ such that 
	\begin{equation}
		\big\|\sum_{s=0}^{\lambda-1} \nabla F_{i}\left(\boldsymbol{w}_{(k, i)}^{\lambda}\right)\big\|^{2} \leq \delta_{(k,i)} \sum_{s=0}^{\lambda-1}\big\|\nabla F(\boldsymbol{w}_{k})\big\|^{2}.
	\end{equation}
\end{assumption}
Under Assumptions \ref{assumption-L} to \ref{assumption-delta} and the update rule of FedVeca, we get the upper bound of $ F(\boldsymbol{w}_{k+1}) - F(\boldsymbol{w}_{k}) $.
\begin{theorem}
	\label{theorem-1}
	In the $ k $-th round for $ k \in [0,k-1] $, when $ \eta \tau_{k} L -1 \geq 0 $, we have
	\begin{equation}
		\label{F-theo1}
		\begin{aligned}
			&\frac{F\left(\boldsymbol{w}_{k+1}\right)-F\left(\boldsymbol{w}_{k}\right)}{\left\|\nabla F\left(\boldsymbol{w}_{k}\right)\right\|^{2}} \\
			\leq &\eta \tau_{k}\big(\frac{1}{4} \eta \sum_{i=1}^{N} p_{i}\left(\tau_{(k,i)}\left(2 L+A_{(k,i)} \right)-A_{(k,i)}\right) -1 \big),
		\end{aligned}
	\end{equation}
where $ A_{(k,i)} \triangleq \eta \beta_{(k,i)}^{2} \delta_{(k,i)} $ is a variable that varies with $ \delta_{(k,i)} $ and $ \beta_{(k,i)}^{2} $.
\end{theorem}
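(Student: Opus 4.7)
The plan is to start from the standard descent lemma generated by Assumption \ref{assumption-L}, then turn the cross term $\nabla F(\boldsymbol{w}_k)^{T}\boldsymbol{d}_k$ into a polarization identity, and finally estimate the resulting ``client drift'' term $\|\boldsymbol{d}_k-\nabla F(\boldsymbol{w}_k)\|^{2}$ by chaining Assumptions \ref{assumption-beta} and \ref{assumption-delta} along the local trajectory.

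First I would apply the Lipschitz smoothness of $F$ and substitute the FedNova global update $\boldsymbol{w}_{k+1}-\boldsymbol{w}_k=-\eta\tau_k\boldsymbol{d}_k$ to get
\begin{equation*}
F(\boldsymbol{w}_{k+1})-F(\boldsymbol{w}_k)\leq -\eta\tau_k\,\nabla F(\boldsymbol{w}_k)^{T}\boldsymbol{d}_k+\tfrac{L}{2}\eta^{2}\tau_k^{2}\|\boldsymbol{d}_k\|^{2}.
\end{equation*}
Next I would use the identity $-2\,\nabla F(\boldsymbol{w}_k)^{T}\boldsymbol{d}_k=\|\boldsymbol{d}_k-\nabla F(\boldsymbol{w}_k)\|^{2}-\|\boldsymbol{d}_k\|^{2}-\|\nabla F(\boldsymbol{w}_k)\|^{2}$ to rearrange the right-hand side into the sum of $-\tfrac{\eta\tau_k}{2}\|\nabla F(\boldsymbol{w}_k)\|^{2}$, $\tfrac{\eta\tau_k}{2}\|\boldsymbol{d}_k-\nabla F(\boldsymbol{w}_k)\|^{2}$, and $\tfrac{\eta\tau_k}{2}(\eta\tau_k L-1)\|\boldsymbol{d}_k\|^{2}$. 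Since by hypothesis $\eta\tau_k L-1\geq 0$, the coefficient on $\|\boldsymbol{d}_k\|^{2}$ is non-negative, so Assumption \ref{assumption-d} lets me replace $\|\boldsymbol{d}_k\|^{2}$ by $\|\nabla F(\boldsymbol{w}_k)\|^{2}$ without flipping the inequality; this converts the entire bound into a multiple of $\|\nabla F(\boldsymbol{w}_k)\|^{2}$, up to the drift term.

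The main obstacle is the drift term $\|\boldsymbol{d}_k-\nabla F(\boldsymbol{w}_k)\|^{2}$, where all the per-client structure enters. Using the definition $\boldsymbol{d}_k=\sum_i p_i\boldsymbol{G}_{(k,i)}$ and the estimator $\nabla F(\boldsymbol{w}_k)=\sum_i p_i\nabla F_i(\boldsymbol{w}_k)$ from \eqref{eq-global grad}, I would write
\begin{equation*}
\boldsymbol{d}_k-\nabla F(\boldsymbol{w}_k)=\sum_{i=1}^{N}p_i\,\frac{1}{\tau_{(k,i)}}\sum_{\lambda=0}^{\tau_{(k,i)}-1}\bigl(\nabla F_i(\boldsymbol{w}_{(k,i)}^{\lambda})-\nabla F_i(\boldsymbol{w}_k)\bigr)
\end{equation*}
and apply Jensen's inequality twice (once over $i$ with weights $p_i$, once over $\lambda$ with weights $1/\tau_{(k,i)}$) to move the squared norm inside. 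Assumption \ref{assumption-beta} then bounds each summand by $\beta_{(k,i)}^{2}\|\boldsymbol{w}_k-\boldsymbol{w}_{(k,i)}^{\lambda}\|^{2}$. Iterating the local update rule \eqref{local update} gives $\boldsymbol{w}_k-\boldsymbol{w}_{(k,i)}^{\lambda}=\eta\sum_{s=0}^{\lambda-1}\nabla F_i(\boldsymbol{w}_{(k,i)}^{s})$, so Assumption \ref{assumption-delta} delivers $\|\boldsymbol{w}_k-\boldsymbol{w}_{(k,i)}^{\lambda}\|^{2}\leq\eta^{2}\delta_{(k,i)}\lambda\|\nabla F(\boldsymbol{w}_k)\|^{2}$.

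Finally I would sum the arithmetic progression $\sum_{\lambda=0}^{\tau_{(k,i)}-1}\lambda=\tau_{(k,i)}(\tau_{(k,i)}-1)/2$, which yields
\begin{equation*}
\|\boldsymbol{d}_k-\nabla F(\boldsymbol{w}_k)\|^{2}\leq\frac{\eta}{2}\sum_{i=1}^{N}p_i A_{(k,i)}\bigl(\tau_{(k,i)}-1\bigr)\|\nabla F(\boldsymbol{w}_k)\|^{2},
\end{equation*}
with $A_{(k,i)}=\eta\beta_{(k,i)}^{2}\delta_{(k,i)}$. Plugging this back into the descent inequality, dividing by $\eta\tau_k$, and using $\eta\tau_k L=\eta L\sum_i p_i\tau_{(k,i)}$ to combine the $\tau_{(k,i)}(2L+A_{(k,i)})$ and $-A_{(k,i)}$ contributions, I recover \eqref{F-theo1}. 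I expect the trickiest bookkeeping to be the double application of Jensen and the subsequent collection of the Lipschitz term $2L$ together with the $A_{(k,i)}$ coefficient into the claimed expression $\tau_{(k,i)}(2L+A_{(k,i)})-A_{(k,i)}$.
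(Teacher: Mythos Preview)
Your proposal is correct and follows essentially the same route as the paper's proof: descent lemma plus FedNova update, polarization of the cross term, use of $\eta\tau_kL-1\geq 0$ together with Assumption~\ref{assumption-d} to absorb $\|\boldsymbol{d}_k\|^{2}$, then the two Jensen applications and Assumptions~\ref{assumption-beta}--\ref{assumption-delta} to bound the drift, the arithmetic-progression sum, and finally the regrouping via $\tau_k=\sum_i p_i\tau_{(k,i)}$. The paper's appendix carries out exactly these steps in the same order.
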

\begin{proof}
	We first perform an inequality transformation on $ F\left(\boldsymbol{w}_{k+1}\right)-F\left(\boldsymbol{w}_{k}\right) $ and then convert the equation to contain only the similar items of $ \left\|\nabla F\left(\boldsymbol{w}_{k}\right)\right\|^{2} $. For details, see Appendix A 
\end{proof}
According to Theorem \ref{theorem-1}, if the model can converge under our method, then it should be consistent with $ F\left(\boldsymbol{w}_{k+1}\right) - F\left(\boldsymbol{w}_{k}\right) < 0 $. It is equivalent to $ \frac{1}{4} \eta \sum_{i=1}^{N} p_{i}\left(\tau_{(k,i)}\left(2 L+A_{(k,i)} \right)-A_{(k,i)}\right) < 1 $, and this inequality is related to $ \tau_{(k,i)} $. We can see from \eqref{F-theo1} that the parameters in $ A_{(k,i)} $ (the effect of Non-IID) do not play a role when $ \tau_{(k,i)} = 1 $. Therefore, we set the lower bound of $ \tau_{(k,i)} > 1 $, and have another theorem to determine the upper bound on $ \tau_{(k,i)} $.

\begin{theorem}
	\label{theorem-2}
		In the $ k $-th round for $ k \in [0,k-1] $, the model converges when
		\begin{equation}
			\label{eq-theorem-2}
			 \tau_{(k,i)} \leq \frac{A_{(k, i)}}{A_{(k, i)}-\alpha_{k} \min _{i} A_{(k, i)}}.
		\end{equation}
	for each client $ i $, where $ \alpha_{k} $ is a real number and has $ \alpha_{k} \in (0, \frac{2L}{\min_{i} A_{(k,i)}}) $ (when $ \frac{2L}{\min_{i} A_{(k,i)}} < 1 $) and $ \alpha_{k} \in (0,1) $ (when $ \frac{2L}{\min_{i} A_{(k,i)}} > 1 $).
\end{theorem}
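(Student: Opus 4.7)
The plan is to start from the convergence condition implicit in Theorem~\ref{theorem-1} and reduce it to an explicit per-client upper bound on $\tau_{(k,i)}$.

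First, I would invoke Theorem~\ref{theorem-1}: since $\eta\tau_k>0$, the sign of $F(\boldsymbol{w}_{k+1})-F(\boldsymbol{w}_k)$ is controlled by the bracketed factor on the right-hand side of \eqref{F-theo1}, and a sufficient condition for strict decrease (i.e., convergence in this round) is
\[
\frac{\eta}{4}\sum_{i=1}^{N} p_i\bigl[\tau_{(k,i)}(2L+A_{(k,i)})-A_{(k,i)}\bigr]<1.
\]
Splitting $\tau_{(k,i)}(2L+A_{(k,i)})-A_{(k,i)} = 2L\tau_{(k,i)} + A_{(k,i)}(\tau_{(k,i)}-1)$ and using $\sum_i p_i\tau_{(k,i)}=\tau_k$, this rewrites as
\[
\frac{\eta}{4}\Bigl[2L\tau_k+\sum_{i=1}^{N} p_i A_{(k,i)}(\tau_{(k,i)}-1)\Bigr]<1.
\]

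To isolate a per-client bound, I would introduce a tunable parameter $\alpha_k>0$ and impose the pointwise sufficient condition
\[
A_{(k,i)}(\tau_{(k,i)}-1)\le \alpha_k\,\min_j A_{(k,j)}\cdot\tau_{(k,i)},\quad\forall i.
\]
Because $\alpha_k\min_j A_{(k,j)}$ is constant in $i$, weighting by $p_i$ and summing gives $\sum_i p_i A_{(k,i)}(\tau_{(k,i)}-1)\le \alpha_k\min_j A_{(k,j)}\cdot\tau_k$, collapsing the convergence condition to
\[
\frac{\eta\tau_k}{4}\bigl(2L+\alpha_k\min_j A_{(k,j)}\bigr)<1.
\]
Solving the pointwise inequality for $\tau_{(k,i)}$, assuming $A_{(k,i)}-\alpha_k\min_j A_{(k,j)}>0$, recovers \eqref{eq-theorem-2} exactly. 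Positivity of the denominator combined with $A_{(k,i)}\ge \min_j A_{(k,j)}$ forces $\alpha_k<1$.

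The two-case restriction on $\alpha_k$ then emerges from enforcing compatibility with the prerequisite $\eta\tau_k L\ge 1$ of Theorem~\ref{theorem-1}. The admissible interval $[\tfrac{1}{\eta L},\,\tfrac{4}{\eta(2L+\alpha_k\min_j A_{(k,j)})})$ for $\tau_k$ must be nonempty, which rearranges to $\alpha_k<2L/\min_j A_{(k,j)}$. This is the binding upper bound when $2L/\min_j A_{(k,j)}<1$; otherwise the denominator-positivity requirement $\alpha_k<1$ is binding, matching the two cases stated in the theorem. The main obstacle I anticipate is justifying the per-client sufficient inequality in the second step: it is the design choice that distributes the aggregate slack uniformly across clients through the scalar $\alpha_k$, and the structural observation is that replacing the heterogeneous $A_{(k,i)}$ on the right-hand side by the single constant $\min_j A_{(k,j)}$ is what permits the weighted sum to be re-expressed purely in terms of $\tau_k$. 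Everything else is algebraic rearrangement together with a brief case analysis.
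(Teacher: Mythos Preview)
Your proposal is correct and follows essentially the same route as the paper. The only cosmetic difference is that the paper first introduces an auxiliary slack parameter $\varepsilon_k\in(0,2)$ via the inequality $\tfrac{1}{4}(\varepsilon_k+2)\eta\tau_k L<1$, imposes the per-client condition $\tau_{(k,i)}(A_{(k,i)}-\varepsilon_k L)\le A_{(k,i)}$, and only afterwards substitutes $\varepsilon_k L=\alpha_k\min_i A_{(k,i)}$; your version skips $\varepsilon_k$ and works directly with $\alpha_k$, but the per-client sufficient inequality, the collapsed aggregate condition, and the two-case analysis on $\alpha_k$ are identical after this change of variable.
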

\begin{proof}
	For details, see Appendix B 
\end{proof}

According to \eqref{eq-theorem-2}, we can define the positive and negative direction of the bid-irectional vector (in Section \ref{goal}) based on the gap between the value of $ A_{(k,i)} $ and the value of $ \alpha_{k} \min_{i} A_{(k,i)} $. Further, we can obtain the relationship between the bounds on the step size $ \tau_{(k,i)} $ and the corresponding direction of the bi-directional vector. We next describe how our FedVeca algorithm adaptively controls $ \tau_{(k,i)} $ to achieve our objective in Section \ref{goal}.

\subsection{FedVeca Algorithm}
FL is required to perform more local computations in each communication round \cite{mcmahan2017communication}. Thus, according to the convex assumption and Theorem \ref{theorem-2}, we predict that
\begin{equation}
	\label{tau}
	\tau_{(k + 1,i)} =\lfloor \frac{A_{(k,i)}}{A_{(k,i)}- \alpha_{k} \min_{i} A_{(k,i)}} \rfloor.
\end{equation}
That is, we use the results of the previous round to predict the number of local SGD iterations on each node in the next round. When $ \tau_{(k + 1,i)} $ is calculated as $ \tau_{(k + 1,i)} = 1 $, to keep $  \tau_{(k + 1,i)} > 1 $, we reset $ \tau_{(k + 1,i)} = 2 $ in our algorithm. Moreover, when $ \min_{i} A_{(k,i)} $ is determined in the $ k $-th round, we can choose a suitable value of $ \alpha_{k} $ to make the global model for the next round close to our global objective (which is mentioned in Section \ref{goal}).

As mentioned earlier, the local update runs on the clients and the global aggregation is performed with the assistance of the parameter server. The complete process of the parameter server and each client is presented in Algorithm \ref{alg-server} and Algorithm \ref{alg-node}, respectively, where Lines 5-8 of Algorithm \ref{alg-node} are local updates and the rest are considered as part of initialization, global aggregation and computing the value of $ \tau_{(k + 1,i)} $. We assume that the server initiates the learning process, then initializes $ \tau_{(0,i)} $, $ \boldsymbol{w}_{0} $ and $ \nabla F\left(\boldsymbol{w}_{0}\right) $ and sends it to all clients. The input consists of a given $ K $, an $ \alpha_{k} $ that transforms with round $ k $, and a $ \eta $ that is fixed for all rounds, and finally FedVeca algorithm gives the global model $ \boldsymbol{w}_{K} $ obtained for the final round $ K $.
\begin{figure}[!t]
	\begin{algorithm}[H]
		\caption{Procedure at the server.}
		\label{alg-server}
		\begin{algorithmic}[1]
			\REQUIRE Total round $ K $, parameter $ \alpha_{k} $, learning rate $ \eta $ and minimum loss value $F_{m}$
			\ENSURE $ \boldsymbol{w}_{K} $
			\STATE Initialize $ k = 0 $, $F_{m}=\infty$, $ \tau_{(k,i)} $ and $ \boldsymbol{w}_{k} $;
			\REPEAT
			\STATE Send $ \tau_{(k,i)} $ and $ \boldsymbol{w}_{k} $ to all clients;
			\STATE Receive $ \nabla F_{i}(\boldsymbol{w}_{k}) $ and $F_{i}\big(\boldsymbol{w}_{(k,i)}^{\lambda=\tau_{i}}\big)$ from each node $ i $;
			\STATE Compute $ \nabla F\left(\boldsymbol{w}_{k}\right) $ and $F\left(\boldsymbol{w}_{k+1}\right)$ according to \eqref{eq-global grad} and \eqref{eq-global-loss}, respectively;
			\STATE Receive $ \boldsymbol{G}_{(k,i)} $ from each node $ i $;
			\STATE Compute $ \boldsymbol{w}_{k+1} $ according to \eqref{eq-fednova};
                \IF{$F\left(\boldsymbol{w}_{k+1}\right)\leq F_{m}$}
                \STATE Set $F_{m}\leftarrow F\left(\boldsymbol{w}_{k+1}\right)$
                \ELSE
                \STATE Set $\boldsymbol{w}_{k+1}\leftarrow\boldsymbol{w}_{k}$
                \ENDIF
			\IF{$ k \geq 1 $}
			\STATE Send $ \nabla F\left(\boldsymbol{w}_{k-1}\right) $ to all clients;
			\STATE Receive $ \beta_{(k,i)} $ and $ \delta_{(k,i)} $;
			\IF{$ k = 1 $}
			\STATE Estimate $ L_{k-1} \leftarrow \| \nabla F(\boldsymbol{w}_{k-1}) \| / \| \boldsymbol{w}_{k-1} \| $;
			\ELSE
			\STATE Estimate $ L_{k-1} \leftarrow \| \nabla F(\boldsymbol{w}_{k-1}) - \nabla F(\boldsymbol{w}_{k-2}) \| / $\\ $ \| \boldsymbol{w}_{k-1} - \boldsymbol{w}_{k-2} \| $;
			\ENDIF
			\STATE Estimate $ L \leftarrow \max_{k} L_{k-1} $;
			\STATE Compute the value of $ \tau_{(k + 1,i)} $ according to \eqref{tau};
			\STATE Compute $ \tau_{k+1} $ according to \eqref{eq-fednova};
			\IF{$ \tau_{(k + 1,i)} \leq 1 $}
			\STATE Set $ \tau_{(k + 1,i)} = 2 $;
			\ENDIF
			\ENDIF
			\STATE $ k \leftarrow k+1 $;
			\IF{$ k = 1 $}
			\STATE Set $ \tau_{(k,i)} = \tau_{(k - 1,i)} $;
			\ENDIF
			\IF{$ k = K $}
			\STATE Set $ STOP $ flag, and send it to all clients;
			\ENDIF			\UNTIL{$ STOP $ flag is set}
		\end{algorithmic}
	\end{algorithm}
\end{figure}
\begin{figure}[!t]
	\begin{algorithm}[H]
		\caption{Procedure at each client $ i $.}
		\label{alg-node}
		\begin{algorithmic}[1]
			\STATE Initialize $ \lambda \leftarrow 0 $, $ k = 0 $ and $ \boldsymbol{w}_{(k, i)}^{\lambda} $;
			\REPEAT
			\STATE Receive $ \tau_{(k,i)} $ and $ \boldsymbol{w}_{k} $ from the server;
			\STATE Set $ \boldsymbol{w}_{(k, i)}^{\lambda} \leftarrow \boldsymbol{w}_{k} $;
			\FOR{$ \lambda = 0,1,2,...,\tau_{(k,i)} -1 $}
			\STATE Compute $ \nabla F_{i}\big(\boldsymbol{w}_{(k,i)}^{\lambda} \big) $;
			\STATE Perform local update of $ \boldsymbol{w}_{(k,i)}^{\lambda} $ according to \eqref{local update};
			\ENDFOR
			\STATE Compute $ \nabla F_{i}\left(\boldsymbol{w}_{k}\right) $ and $F_{i}\big(\boldsymbol{w}_{(k,i)}^{\lambda=\tau_{i}}\big)$;
			\STATE Send $ \nabla F_{i}\left(\boldsymbol{w}_{k}\right) $ and $F_{i}\big(\boldsymbol{w}_{(k,i)}^{\lambda=\tau_{i}}\big)$ to parameter server;
			\STATE Compute $ \boldsymbol{G}_{(k,i)} $ according to \eqref{eq-fednova};
			\STATE Send $ \boldsymbol{G}_{(k,i)} $ to parameter server;
			\IF{$ k \geq 1 $}
			\STATE Receive $ \nabla F\left(\boldsymbol{w}_{k-1}\right) $ from the server;
			\STATE Estimate $ \beta_{(k,i)}^{\lambda} \leftarrow \big\|\nabla F_{i}\left(\boldsymbol{w}_{k}\right)-\nabla F_{i}\big(\boldsymbol{w}_{(k, i)}^{\lambda}\big)\big\| / $\\ $ \big\|\boldsymbol{w}_{k}-\boldsymbol{w}_{(k, i)}^{\lambda}\big\|$ for $ \lambda \in [0, \tau_{(k, i)}-1] $;
			\STATE Estimate $ \beta_{(k,i)} \leftarrow \max_{\lambda} \beta_{(k,i)}^{\lambda} $;
			\STATE Estimate $ \delta_{(k,i)}^{\lambda} \leftarrow \big\|\sum_{s=0}^{\lambda} \nabla F_{i}\big(\boldsymbol{w}_{(k, i)}^{s}\big)\big\|^{2} / $ \\ $ \sum_{s=0}^{\lambda}\left\|\nabla F(\boldsymbol{w}_{k-1})\right\|^{2} $ for $ \lambda \in [1, \tau_{(k, i)}-1] $;
			\STATE Estimate $ \delta_{(k,i)} \leftarrow \max_{\lambda} \delta_{(k,i)}^{\lambda} $;	
			\STATE Send $ \beta_{(k,i)} $ and $ \delta_{(k,i)} $ to parameter server;		
			\ENDIF
			\STATE $ k \leftarrow k + 1 $;
			\UNTIL{$ STOP $ flag is received}
		\end{algorithmic}
	\end{algorithm}
\end{figure}
\subsubsection{Handling of SGD}
When using SGD with FedVeca algorithm at all clients, all their gradients are computed on mini-batches. Each SGD step corresponds to a model update step where the gradient is computed on a mini-batch of local training data in Line 6 of Algorithm \ref{alg-node}. The mini-batch changes for every step of the local iteration, i.e., for each new local iteration, a new mini-batch of a given size is randomly selected from the local training data.  After $ \tau_{(k, i)} $ local update steps, we average the corresponding model gradient vectors on these training data and get obtain the direction vector $ G_{(k,i)} $ of SGD at node $ i $ (Line 11 of Algorithm \ref{alg-node}). 

When the parameter server receives $ G_{(k,i)} $ from each client, it updates $ \boldsymbol{w}_{k+1} $ (Line 7 of Algorithm \ref{alg-server}) according to the second term in \eqref{eq-fednova} where $ \tau_{k} $ is calculated based on $ \tau_{(k, i)} $ in Line 23 of Algorithm \ref{alg-server}. With the value of $F_{i}\big(\boldsymbol{w}_{(k,i)}^{\lambda=\tau_{i}}\big)$ received from the clients (Line 4 of Algorithm \ref{alg-server}) and Equation \eqref{eq-global-loss}, we can estimate the value of the loss function of $ \boldsymbol{w}_{k+1} $ (Line 5 of Algorithm \ref{alg-server}). Compare it with the set minimum loss function value $F_m$, if it is less than, then accept this global update and assign the value to $F_m$, and if not, then do not accept this global update (Line 8-12 of Algorithm \ref{alg-server}). When the program proceeds to the set number of rounds $ k $, the final model parameter $ \boldsymbol{w}_{K} $ is obtained at the server in Lines 32-34 of Algorithm \ref{alg-server}. Then we set the $ STOP $ flag to stop the server-side program and send the flag to all clients to stop the local program.
\subsubsection{Estimation of Parameters}
\label{sec-estmation}
According to Assumptions \ref{assumption-L}, \ref{assumption-beta} and \ref{assumption-delta}, we have three parameters $ L $, $ \beta_{(k, i)} $ and $ \delta_{(k, i)} $ that need to be estimated in real-time during the learning process. $ L $ is the parameter describing the smooth of global gradients and should be ensured to satisfy Assumption \ref{assumption-L}, so we perform the estimation of its maximum value (in all rounds for $ k>0 $) on the server side in Lines 21 of Algorithm \ref{alg-server}. Estimating $ L $ requires the global gradient of the previous round, which does not exist when $ k=0 $, so we give this program a delay of one round in Lines 19 of Algorithm \ref{alg-server}. We know that $ L $ is not involved in the calculation of the new $ \tau_{(k, i)} $ in \eqref{tau} and used to satisfy that premise $ \eta \tau_{k} L -1 \geq 0 $ of Theorem \ref{theorem-1}, so a delay of one round in its estimation $ L_{k-1} $ has little effect on the final result. 

Therefore FedVeca algorithm should estimate the value of $ A_{(k,i)} $. The expression of $ A_{(k,i)} $ includes parameters $ \beta_{(k, i)} $ and $ \delta_{(k, i)} $ which need to be estimated in practice on the client side. Before starting the estimation, we need to calculate $ F_{i}\left(\boldsymbol{w}_{k}\right) $ on the local training dataset in Line 9 of Algorithm \ref{alg-node}. Then, the estimation of $ \beta_{(k, i)} $ is computed from $ \lambda = 0 $ to $ \lambda = \tau_{(k, i)} -1 $ and the estimation of $ \delta_{(k, i)} $ is computed from $ \lambda = 1 $ to $ \lambda = \tau_{(k, i)} -1 $ in Lines 15 and 17 of Algorithm \ref{alg-node}. Finally, the largest estimations of $ \beta_{(k, i)} $ and  $ \delta_{(k, i)} $ from these iterations will be selected (Lines 16 and 18 of Algorithm \ref{alg-node}) and sent to the parameter server when $ k > 0 $ (Lines 19 of Algorithm \ref{alg-node}). 

\subsubsection{Computing $ \tau_{(k + 1,i)} $}
When $ k=0 $, the server does not receive the relevant parameters ($ \beta_{(k, i)} $ and $ \delta_{(k, i)} $ ) from the clients to calculate the value of $ \tau_{(k + 1,i)} $. This is because the value of $ L $ cannot be estimated at this point, and we have no way to choose the value of $ \alpha_{k} $ to input. When $ k \geq 1 $, the value of $ \tau_{(k + 1,i)} $ can be computed according to \eqref{tau} in Line 22 of Algorithm \ref{alg-server}. Then, to satisfy $ \tau_{(k + 1,i)} > 1 $ in Section \ref{conv-ana}, we set $ \tau_{(k+1,i)} = 2 $ for $ \tau_{(k + 1,i)} \leq 1 $ in Lines 24-26 of Algorithm \ref{alg-server}.

After computing the value of $ \tau_{(k+1,i)} $, we can compute the global step size $ \tau_{k+1} $ for the next round ($ k \leftarrow k + 1 $) of aggregation in Line 23 of Algorithm \ref{alg-server}. Combining our obtained estimation of $ L $, we can verify the premise $ \eta \tau_{k} L -1 \geq 0 $ of Theorem \ref{theorem-1}. The verification results and the test results of FedVeca algorithm will be shown in Section \ref{sec-experiment}.

\section{Performance Analysis}
\label{sec-experiment}
To evaluate FedVeca algorithm, the experiments focused on the qualitative and quantitative analysis of general properties under the setup of our prototype system and the simulation of IID and Non-IID datasets. 
\subsection{Setup}
\label{subsec-set}
We first evaluate the general performance of FedVeca algorithm, thus we conducted experiments on a networked prototype system with five clients. The prototype system consists of five Raspberry Pi (version 4B) devices and one laptop computer, which are all interconnected via Wi-Fi in an office building. The laptop computer has an aggregator and implements FedVeca algorithm of parameter server, and the Raspberry Pi device implements FedVeca algorithm of client. All of these five clients have model training with local datasets and different simulated dataset distributions of IID and Non-IID.

\subsubsection{Baselines}
We compare FedVeca method with the following baseline approaches: 
\begin{itemize}
	\item Centralized SGD where the entire training dataset is stored on a single device and the model is trained directly on that device using a standard (centralized) SGD procedure.
	\item Standard FL approach which uses FedAvg algorithm and has the fixed (non-adaptive) value of $ \tau_{(k, i)} $ at all clients in all rounds.
	\item Novel FL approach which uses FedNova algorithm and has the same value of $ \tau_{(k, i)} $ as FedAvg algorithm at all clients in all rounds.
        \item Novel FL approach which uses FedProx algorithm and has the same value of $ \tau_{(k, i)} $ as FedAvg algorithm at all clients in all rounds.
        \item Novel FL approach which uses SCAFFOLD algorithm and has the same value of $ \tau_{(k, i)} $ as FedAvg algorithm at all clients in all rounds.
\end{itemize}

For a fair comparison, we first run FedVeca algorithm and then record the total number of local iterations $ \tau_{all} $ run by all nodes in all $ K $ rounds. Then for the centralized SGD, we train $ \tau_{all} $ iterations and each iteration randomly picks a batch of the same size $ B $ (fixed) as FedVeca algorithm, then we use this model for evaluating the convergence of other trained models. When evaluating the FedAvg algorithm and FedNova algorithm, we compute the average epoch $ E_{avg} = \tau_{all} / K \times B / D  $ of all rounds, and assign a fixed $ \tau_{(k, i)} = \lfloor E_{avg}(D_{i}/B) \rfloor $ to the clients for each round.

\subsubsection{Model and Datasets}
\label{model-data}
We evaluate the training of two different models on two different datasets, which represent both small and large models and datasets. The models include squared-Support Vector Machine (SVM)\footnote{The squared-SVM has a fully connected neural network, and outputs a binary label that corresponds to whether the digit is even or odd.} (we refer to as SVM in short in the following) and deep Convolutional Neural Network (CNN)\footnote{The CNN has two $ 5 \times 5 \times 32 $ convolution layers, two $ 2 \times 2 $ MaxPoll layers, a $ 1568 \times 256 $ fully connected layer, a $ 256 \times 10 $ fully connected layer, and a softmax output layer with 10 units.}. Among them, the loss functions for SVM satisfy Assumption \ref{assumption-L}, whereas the loss functions for CNN are non-convex and thus do not satisfy Assumption \ref{assumption-L}.

SVM is trained on the original MNIST (which we refer to as MNIST in short in the following) dataset \cite{lecun1998gradient}, which contains the gray-scale images of $ 7 \times 10^{4} $ handwritten digits ($ 6 \times 10^{4} $ for training and $ 10^{4} $ for testing).

CNN is trained using SGD on two different datasets: the MNIST dataset and the CIFAR-10 dataset \cite{krizhevsky2009learning}, and the CIFAR-10 dataset includes $ 6 \times 10^{4} $ color images ($ 5 \times 10^{4} $ for training and $ 10^{4} $ for testing) associated with a label from 10 classes. A separate CNN model is trained on each dataset, to perform multi-class classification among the 10 different labels in the dataset.
\subsubsection{Simulation of Dataset Distribution}
\label{sec-distribute}
To simulate the dataset distribution, we set up two different Non-IID cases and a standard IID case.
\begin{itemize}
	\item \textbf{Case 1} (IID): Each data sample is randomly assigned to a client, thus each client has a uniform (but not full) information.
	\item \textbf{Case 2} (Non-IID): All the data samples in each client have the same label, which means that the dataset on each node has its unique features.
	\item \textbf{Case 3} (Non-IID): Data samples with the first half of the labels are distributed to the first half of the clients as in Case 1, the other samples are distributed to the second half of the clients as in Case 2.
\end{itemize}
\subsubsection{Training and Control Parameters}
\label{control}
In all our experiments, we set the maximum $ \tau_{(k,i)} $ value $ \max \tau_{(k,i)} = 50 $ to reduce the impact of errors in the experiment. Unless otherwise specified, we set the control parameter $ \alpha_{k} $ to a fixed value $ \alpha_{k} = 0.95 $ for all rounds. And we manually select the total round $ K =100 $ and the learning rate fixed at $ \eta = 0.01 $, which is acceptable for our learning process. Except for the instantaneous results, the others are the average results of 10 independent experiment runs.

\subsection{Results}
\label{results}
\subsubsection{Loss and Accuracy Values}
\label{loss and acc}
\begin{figure*}[!t]
	\centering
	\includegraphics[width=6.5in]{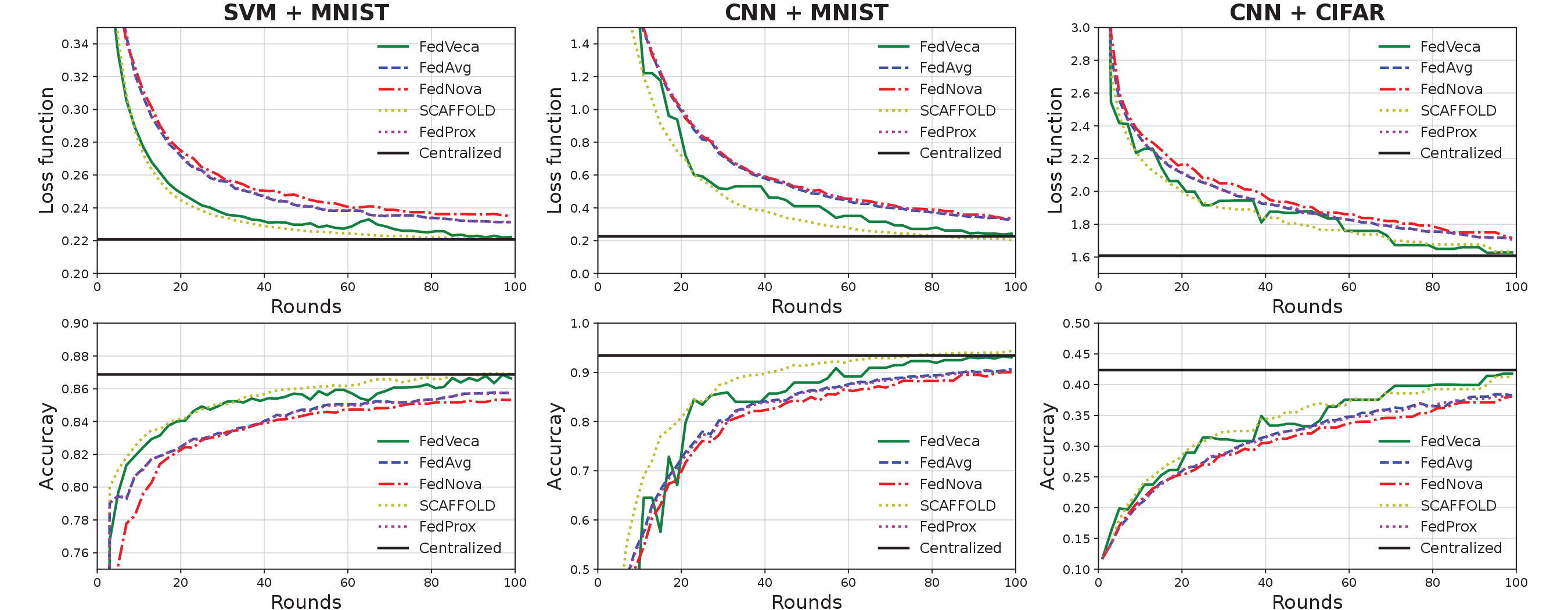}
	\caption{Loss function values and classification accuracy values in Case 3}. The curves show the results from FedVeca algorithm and the baselines with the different models and datasets \cite{lecun1998gradient,krizhevsky2009learning}.
	\label{fig-loss-acc}
\end{figure*}

\begin{table}[!t]
	\centering
	\caption{Model loss values and accuracies for each baseline algorithm obtained in \textbf{Case 3} at the end of the set training rounds ($K=100$).}
	\label{tab:loss-acc-c3}
	\centering
	\begin{tabular}{cccccc}
		\hline
		Model & Algorithm & Loss($ \times 10^{-2}$) & Acc (\%) & Centralized\\
		\hline
		~ & FedAvg & 23.13 & 85.75 & Loss \\
		~ SVM & FedNova & 23.44 & 85.32 & \& Acc \\
            ~ + & FedProx & 23.14 & 85.76 &  \\
            ~ MNIST & \bf{SCAFFOLD} & \bf{22.07} & \bf{86.92} & 22.06 \\
            ~ & FedVeca & 22.22 & 86.63 & \& 86.87 \\
		\hline
		~ & FedAvg & 32.32 & 90.67 & Loss \\
		~ CNN & FedNova & 33.05 & 90.37 & \& Acc \\
            ~ + & FedProx & 32.76 & 90.47 &  \\
            ~ MNIST & \bf{SCAFFOLD} & \bf{20.80} & \bf{93.86} & 22.67 \\
            ~ & FedVeca & 24.25 & 93.04 & \& 93.43 \\
		\hline
		~ & FedAvg & 170.50 & 38.62 & Loss \\
		~ CNN & FedNova & 171.63 & 38.10 & \& Acc \\
            ~ + & FedProx & 170.77 & 38.17 &  \\
            ~ CIFAR & SCAFFOLD & 163.28 & 41.23 & 160.88 \\
            ~ & \bf{FedVeca} & \bf{160.32} & \bf{42.41} & \& 42.38 \\
		\hline
	\end{tabular}
\end{table}

In our first set of experiments, the SVM and CNN models were trained on the prototype system with Case 3, and the results $ \boldsymbol{w}_{k} $ of each round will be calculated on the test dataset with the loss function values and prediction accuracy values which we refer to as loss and accuracy in short in the following.

We record the loss and accuracy on the SVM and CNN classifiers with FedVeca algorithm (with adaptive $ \tau_{(k,i)} $), and compare them to baseline approaches, where the centralized case only has one optimal value as the training result and we show a flat line across different rounds for the ease of comparison, the results are shown in Fig. \ref{fig-loss-acc}.

It can be observed from Fig. \ref{fig-loss-acc} that the curves of loss and accuracy values of FedVeca algorithm fluctuate less on the SVM model, while they have larger fluctuations on the CNN model. This is because the loss function of the SVM model satisfies Assumption \ref{assumption-L} of a convex function and the loss function of the CNN model is non-convex, as we mentioned in Section \ref{model-data}. Since the loss function of the CNN model is non-convex, the variation (orientation and size) of the local model parameters are unstable when making updates, so it is unreliable for us to describe the overall variation of global model parameters with the estimated values of $ L $, $ \beta_{(k,i)} $ and $ \delta_{(k,i)} $. Further, it is less useful to predict the number of local SGD iterations for the next round based on the new value of $ \tau_{(k, i)} $ calculated from these estimated values. However, within the specified number of rounds ($ K = 100 $), FedVeca algorithm is the first to reach the loss and accuracy of the centralized SGD approach compared to FedAvg, FedNova and FedProx on all models and datasets (nearly simultaneous with SCAFFOLD), demonstrating the convergence and faster convergence speed of FedVeca algorithm with Non-IID datasets. 

As demonstrated in Table \ref{tab:loss-acc-c3}, on the CIFAR dataset, the FedVeca algorithm outperforms the FedAvg, FedNova, FedProx and SCAFFOLD algorithm in terms of loss and accuracy improvement in the final trained model, thereby underscoring the effectiveness of the algorithm. As for the MNIST dataset, We elaborate that although FedVeca exhibits marginally lower performance compared to the state-of-the-art algorithm SCAFFOLD, it nonetheless demonstrates a substantial performance advantage over other well-established algorithms, including FedAvg, FedNova, and FedProx. This underscores FedVeca's reliability and its capacity to deliver competitive results across various federated learning scenarios, even when compared with advanced methods.

\subsubsection{Premise of Theorem \ref{theorem-1}}
\label{premise}
\begin{figure}[!t]
	\centering
	\includegraphics[width=2.5in]{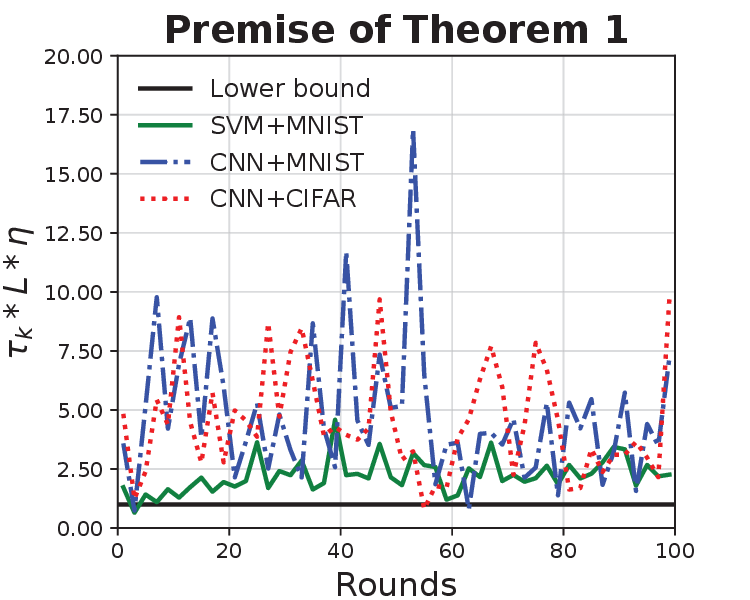}
	\caption{The value of $ \eta \tau_{k} L  $ in Case 3. The curves show the results from FedVeca algorithm with the different models and datasets \cite{lecun1998gradient,krizhevsky2009learning}.}
	\label{fig-premise}
\end{figure}
As described in Section \ref{conv-ana}, the premise for Theorem \ref{theorem-1} to be valid is that $ \eta \tau_{k} L -1 \geq 0 $. Therefore, we record the value of $ \eta \tau_{k} L $ for each round within the specified number of rounds ($ K = 100 $) on all specified models and datasets in Case 3, the results are shown in Fig. \ref{fig-premise}.

In Fig. \ref{fig-premise}, we set the value `$ 1 $' as a lower bound on the value of $ \eta \tau_{k} L $ and use a straight line to represent. Moreover, according to the discussion in Section \ref{sec-estmation}, the estimated value of $ L_{k} $ in Algorithm \ref{alg-server} is delayed by one round, so we make a blank for $ k = 0 $ in Fig. \ref{fig-premise}. As we can see, the specified models and datasets satisfy the premise of Theorem \ref{theorem-1} for all $ K $ rounds, except for the SVM model whose the values of $ \eta \tau_{k} L $ are slightly smaller than the lower bound for the first few rounds on the MNIST dataset (which is within the estimation margin of error). And the SVM model has the most stable $ \eta \tau_{k} L $ values on the MNIST dataset compared to the other specified models and datasets, which is consistent with our conclusion that non-convex loss functions cannot be described accurately as discussed in Section \ref{loss and acc}.

Due to the high complexity of evaluating CNN models, we focus on the SVM model in the following and provide further insights on the prototype system.

\subsubsection{Dataset Distribution}
\label{sec-distribution}
\begin{figure}[!t]
	\centering
	\includegraphics[width=3.5in]{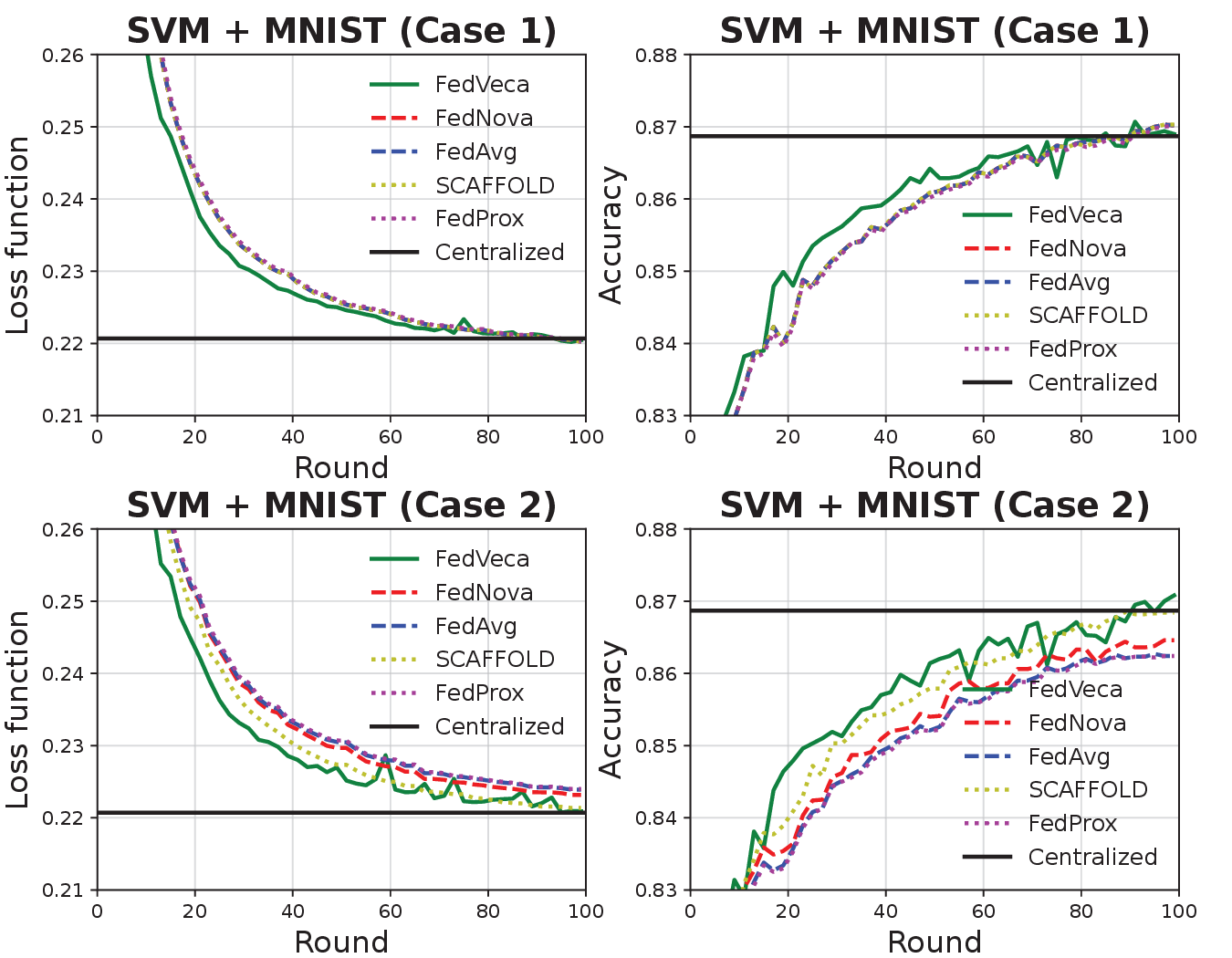}
	\caption{Loss function and classification accuracy on the SVM model and the MNIST dataset \cite{lecun1998gradient} with dataset distriburion of Case 1 and Case 2.}
	\label{fig-case1&case2}
\end{figure}
We test the MNIST dataset distribution on the SVM model for the other two simulations which are Case 1 and Case 2, and the results of FedVeca algorithm and the baselines are shown in Fig. \ref{fig-case1&case2}. 

In Case 1, the curves of loss and accuracy values of FedVeca algorithm in each round overlap with the baseline methods, and both converge within $ K = 100 $ rounds (compared to Centralized SGD), proving that FedVeca algorithm is applicable on the IID dataset. In Case 2, FedVeca algorithm has a smaller difference in loss values and a larger difference in accuracy values in each round compared to the baseline methods, and both loss and accuracy values are the first to reach the convergence point. Together with the results of Case 3 we obtained in Section \ref{loss and acc}, we show that FedVeca algorithm is also applicable to the Non-IID dataset.

By comparing Fig. \ref{fig-case1&case2} and Fig. \ref{fig-loss-acc}, we note that on the SVM model and the MNIST dataset (convex loss function), the convergence rate of the model obtained by FedVeca algorithm is essentially the same in Case 1, Case 2 and Case 3. Meanwhile, the models obtained by FedNova and FedAvg perform better in Case1 and Case2 and worse in Case3. This proves that FedVeca algorithm has good performance and stability on both IID and Non-IID datasets. Therefore, we focus on Case 3 in the following and provide further insights on the prototype system.

\subsubsection{Instantaneous Behavior}
\label{instance}
\begin{figure*}[!t]
	\centering
	\includegraphics[width=6.5in]{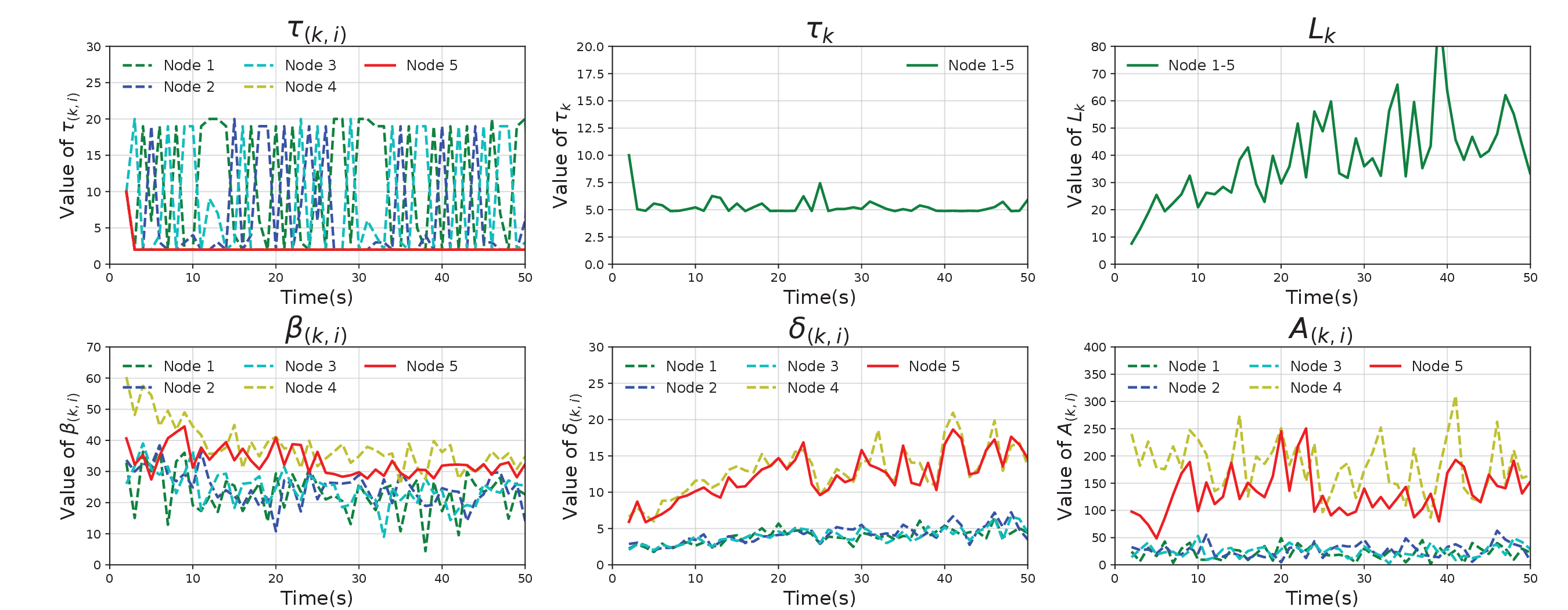}
	\caption{Instantaneous results of a single run (on the SVM model and the MNIST dataset \cite{lecun1998gradient}) with FedVeca algorithm in Case 3.}
	\label{fig-7}
\end{figure*}
We study the instantaneous behavior of FedVeca algorithm for a single run on the prototype system. Results for SVM (MNIST) are shown in Fig. \ref{fig-7}, where we record the amount of variation $ \tau_{(k, i)} $, $ \tau_{k} $, $ L_{k} $ , $ \beta_{(k,i)} $, $ \delta_{(k,i)} $ and $ A_{(k,i)} $ on the five clients, and represent only one curve for  $ \tau_{k} $ and $ L_{k} $. To make the image clearer, we only captured the first 50 rounds of the show.

We can see that the value of $ \tau_{(k, i)} $ fluctuates very much with round $ k $, and the maximum number of SGD iterations in each round is uniformly distributed across clients, indicating that FedVeca algorithm can adaptively choose the optimal $ \tau_{(k, i)} $ value according to the working conditions during the training process. $ \tau_{k} $ represents the step size of global gradient descent in Section \ref{fednova-alg}, and there is no large fluctuation in the curve of $ \tau_{k} $, indicating that the model converges smoothly globally despite the large difference in $ \tau_{(k, i)} $ values at each client under adaptive control. For the value of $ L_{k} $ and Assumption \ref{assumption-L}, we know that as the number of rounds increases, the variation between the model parameters of adjacent rounds becomes smaller, which represents the convergence of the model.

According to Theorem \ref{theorem-2}, $ A_{(k,i)} $ is the key variable for adaptive control of $ \tau_{(k,i)} $ values and $ A_{(k,i)} $ is composed of two variables, $ \beta_{(k,i)} $ and $ \delta_{(k,i)} $. In Fig. \ref{fig-7}, we can see that the values of $ A_{(k,i)} $ on Node 4 and Node 5 are widely spaced compared to those on the remaining three clients. This is related to the Case 3 we mentioned in Section \ref{sec-distribute}. The data distribution on Node 4 and Node 5 are similar but differ significantly from the remaining three clients, causing differences between the $ \beta_{(k,i)} $ and $ \delta_{(k,i)} $ values. Therefore, FedVeca algorithm accurately quantifies these differences and adaptively optimizes the FL training process by controlling the $ \tau_{(k, i)} $ values. 

\subsubsection{Sensitivity of $ \alpha_{k} $}
\label{alpha}
\begin{figure}[!t]
	\centering
	\includegraphics[width=3.5in]{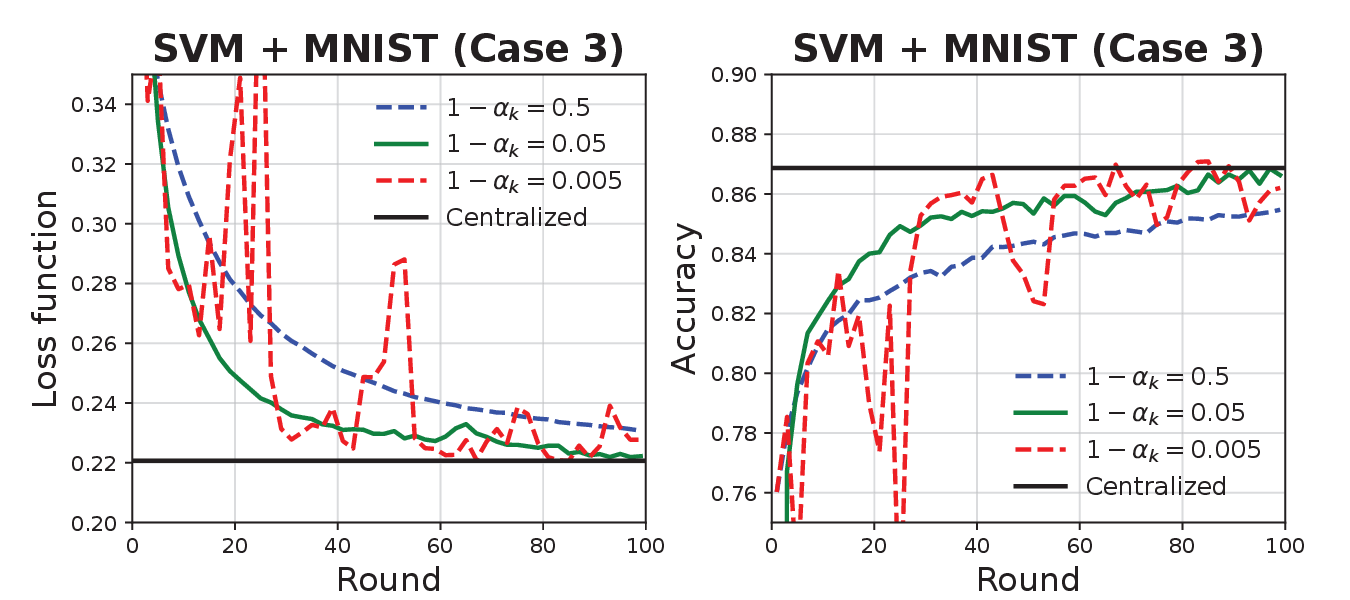}
	\caption{Loss function and classification accuracy (on the SVM model and the MNIST dataset \cite{lecun1998gradient}) with the different values of $ \alpha_{k} $.}
	\label{fig-alpha}
\end{figure}
In the setup of Section \ref{control}, the value of $ \alpha_{k} $ is fixed for each round and we know that the maximum value of $ \tau_{(k,i)} $ on all nodes in a round is related to $ 1-\alpha_{k} $ according to \eqref{eq-theorem-2}. Therefore, we choose three numbers 0.5, 0.05 and 0.005 for the values of $ 1-\alpha_{k} $ and record the variation of the model loss and accuracy values in these three scenarios, the results are shown in Fig. \ref{fig-alpha}. We can see that when $ 1-\alpha_{k} = 0.5 $, the loss and accuracy curves of the model are smooth, but their convergence rate is slow. When $ 1-\alpha_{k} = 0.005 $, the loss and accuracy values of the model reach the convergence point first, but their curves are not smooth. Therefore, in our previous experiments, we chose $ 1-\alpha_{k} = 0.05 $ which means $ \alpha_{k}=0.95 $. In this setting, the loss and accuracy values of the model have both a fast convergence rate and a smooth curve.

\subsubsection{Varying Number of clients}
\label{clients}
\begin{figure}[!t]
	\centering
	\includegraphics[width=3.5in]{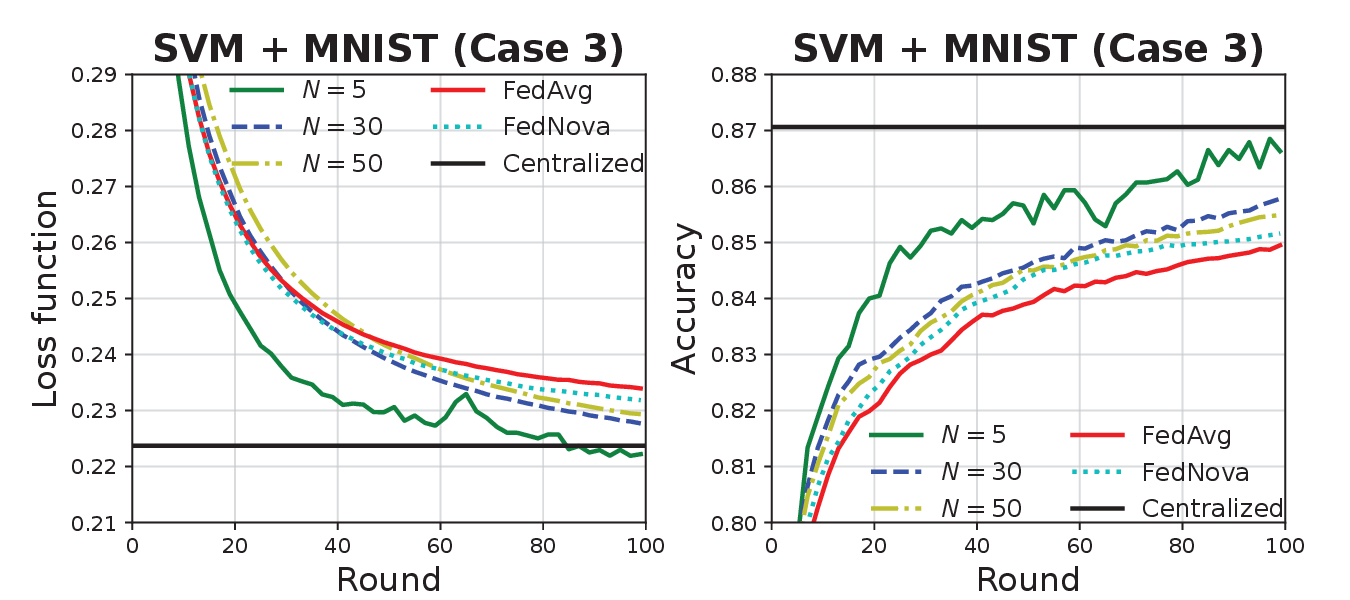}
	\caption{Loss function and classification accuracy (on the SVM model and the MNIST dataset \cite{lecun1998gradient}) with the different number of clients.}
	\label{fig-nodes}
\end{figure}
To simulate the case of multiple nodes, we let each Raspberry Pi 4B in the prototype system run multiple programs of Algorithm \ref{alg-node} in parallel and set the number of programs running on each Raspberry Pi 4B to be equal from 1 to 10, thus simulating the number of nodes from 5 to 50. We record the loss and accuracy values in Case 3, where the number of clients is chosen to be 5, 30 and 50 for FedVeca algorithm, FedNova and FedAvg selected the number of clients as 50, and the results are shown in Fig. \ref{fig-nodes}.

We can observe that the loss and accuracy values of the model do not converge faster as the number of clients increases, which is consistent with the phenomenon of diminishing returns which is described in \cite{mcmahan2017communication}. Also, since the size of our total training dataset is fixed, when the number of nodes increases, the number of samples on each node decreases and the data becomes more dispersed, thus reducing the speed of model convergence. However, even at 50 nodes, the model of FedVeca algorithm converges faster than FedAvg and FedNova approach, proving that FedVeca algorithm is applicable at multiple nodes.

Considering the overall experiments, we can conclude that FedVeca algorithm provides an efficient performance for FL, and provides a novel innovation with theory for FL optimization.

\section{Conclusion and The Future Work}
\label{sec-conclusion}
In order to improve the training efficiency of Federated Learning on the Non-IID dataset in communication networks, this paper proposed an method that is based on FedNova algorithm and controls the number of local SGD iterations on clients to obtain the optimal global training models in each communication round. We analyze the mathematical relationships between the number of local SGD iterations and the global objective in a round, and design an adaptive control algorithm from this relationship to predict the number of local SGD iterations on each client for the next round. Our experimental results confirmed the effectiveness of FedVeca method. In the future, we will explore how FedVeca method performs on the models with no-convex loss functions, and assign more effective weights for updating the global model based on the number and feature of samples per client. Furthermore, we intend to expand the application of the FedVeca algorithm to the rapidly growing field of Computer Vision (CV), specifically targeting more complex datasets such as CIFAR100 and ImageNet.

\section*{Acknowledgments}
This work was supported by the Key Research and Development Program of Hainan Province (Grant No. ZDYF2024GXJS014, ZDYF2023GXJS163), National Natural Science Foundation of China (NSFC) (Grant No. 62162022, 62162024), Collaborative Innovation Project of Hainan University(XTCX2022XXB02).



\normalem
\bibliographystyle{IEEEtran}
\bibliography{IEEEabrv,myrefs}

\begin{thebibliography}{10}
\providecommand{\url}[1]{#1}
\csname url@samestyle\endcsname
\providecommand{\newblock}{\relax}
\providecommand{\bibinfo}[2]{#2}
\providecommand{\BIBentrySTDinterwordspacing}{\spaceskip=0pt\relax}
\providecommand{\BIBentryALTinterwordstretchfactor}{4}
\providecommand{\BIBentryALTinterwordspacing}{\spaceskip=\fontdimen2\font plus
\BIBentryALTinterwordstretchfactor\fontdimen3\font minus \fontdimen4\font\relax}
\providecommand{\BIBforeignlanguage}[2]{{%
\expandafter\ifx\csname l@#1\endcsname\relax
\typeout{** WARNING: IEEEtran.bst: No hyphenation pattern has been}%
\typeout{** loaded for the language `#1'. Using the pattern for}%
\typeout{** the default language instead.}%
\else
\language=\csname l@#1\endcsname
\fi
#2}}
\providecommand{\BIBdecl}{\relax}
\BIBdecl

\bibitem{gu2020towards}
R.~Gu, Z.~Zuo, X.~Jiang, H.~Yin, Z.~Wang, L.~Wang, X.~Li, and Y.~Huang, ``Towards efficient large-scale interprocedural program static analysis on distributed data-parallel computation,'' \emph{IEEE Transactions on Parallel and Distributed Systems}, vol.~32, no.~4, pp. 867--883, 2020.

\bibitem{jiang2021parallel}
J.~Jiang, Z.~Wen, Z.~Wang, B.~He, and J.~Chen, ``Parallel and distributed structured svm training,'' \emph{IEEE Transactions on Parallel and Distributed Systems}, vol.~33, no.~5, pp. 1084--1096, 2021.

\bibitem{mcmahan2017communication}
B.~McMahan, E.~Moore, D.~Ramage, S.~Hampson, and B.~A. y~Arcas, ``Communication-efficient learning of deep networks from decentralized data,'' in \emph{Artificial intelligence and statistics}.\hskip 1em plus 0.5em minus 0.4em\relax PMLR, 2017, pp. 1273--1282.

\bibitem{konevcny2016federated}
J.~Kone{\v{c}}n{\`y}, H.~B. McMahan, F.~X. Yu, P.~Richt{\'a}rik, A.~T. Suresh, and D.~Bacon, ``Federated learning: Strategies for improving communication efficiency,'' \emph{arXiv preprint arXiv:1610.05492}, 2016.

\bibitem{liu2022distributed}
J.~Liu, J.~Huang, Y.~Zhou, X.~Li, S.~Ji, H.~Xiong, and D.~Dou, ``From distributed machine learning to federated learning: A survey,'' \emph{Knowledge and Information Systems}, pp. 1--33, 2022.

\bibitem{wang2021cooperative}
J.~Wang and G.~Joshi, ``Cooperative sgd: A unified framework for the design and analysis of local-update sgd algorithms,'' \emph{Journal of Machine Learning Research}, vol.~22, 2021.

\bibitem{dvurechensky2018computational}
P.~Dvurechensky, A.~Gasnikov, and A.~Kroshnin, ``Computational optimal transport: Complexity by accelerated gradient descent is better than by sinkhorn’s algorithm,'' in \emph{International conference on machine learning}.\hskip 1em plus 0.5em minus 0.4em\relax PMLR, 2018, pp. 1367--1376.

\bibitem{lei2019stochastic}
Y.~Lei, T.~Hu, G.~Li, and K.~Tang, ``Stochastic gradient descent for nonconvex learning without bounded gradient assumptions,'' \emph{IEEE transactions on neural networks and learning systems}, vol.~31, no.~10, pp. 4394--4400, 2019.

\bibitem{harvey2019tight}
N.~J. Harvey, C.~Liaw, Y.~Plan, and S.~Randhawa, ``Tight analyses for non-smooth stochastic gradient descent,'' in \emph{Conference on Learning Theory}.\hskip 1em plus 0.5em minus 0.4em\relax PMLR, 2019, pp. 1579--1613.

\bibitem{aledhari2020federated}
M.~Aledhari, R.~Razzak, R.~M. Parizi, and F.~Saeed, ``Federated learning: A survey on enabling technologies, protocols, and applications,'' \emph{IEEE Access}, vol.~8, pp. 140\,699--140\,725, 2020.

\bibitem{zhang2021federated2}
L.~Zhang, Y.~Luo, Y.~Bai, B.~Du, and L.-Y. Duan, ``Federated learning for non-iid data via unified feature learning and optimization objective alignment,'' in \emph{Proceedings of the IEEE/CVF international conference on computer vision}, 2021, pp. 4420--4428.

\bibitem{gao2022feddc}
L.~Gao, H.~Fu, L.~Li, Y.~Chen, M.~Xu, and C.-Z. Xu, ``Feddc: Federated learning with non-iid data via local drift decoupling and correction,'' in \emph{Proceedings of the IEEE/CVF conference on computer vision and pattern recognition}, 2022, pp. 10\,112--10\,121.

\bibitem{sattler2019robust}
F.~Sattler, S.~Wiedemann, K.-R. M{\"u}ller, and W.~Samek, ``Robust and communication-efficient federated learning from non-iid data,'' \emph{IEEE transactions on neural networks and learning systems}, vol.~31, no.~9, pp. 3400--3413, 2019.

\bibitem{luo2021no}
M.~Luo, F.~Chen, D.~Hu, Y.~Zhang, J.~Liang, and J.~Feng, ``No fear of heterogeneity: Classifier calibration for federated learning with non-iid data,'' \emph{Advances in Neural Information Processing Systems}, vol.~34, pp. 5972--5984, 2021.

\bibitem{zhang2021federated}
P.~Zhang, H.~Sun, J.~Situ, C.~Jiang, and D.~Xie, ``Federated transfer learning for iiot devices with low computing power based on blockchain and edge computing,'' \emph{Ieee Access}, vol.~9, pp. 98\,630--98\,638, 2021.

\bibitem{zhu2021federated}
H.~Zhu, J.~Xu, S.~Liu, and Y.~Jin, ``Federated learning on non-iid data: A survey,'' \emph{Neurocomputing}, vol. 465, pp. 371--390, 2021.

\bibitem{wang2019adaptive}
S.~Wang, T.~Tuor, T.~Salonidis, K.~K. Leung, C.~Makaya, T.~He, and K.~Chan, ``Adaptive federated learning in resource constrained edge computing systems,'' \emph{IEEE Journal on Selected Areas in Communications}, vol.~37, no.~6, pp. 1205--1221, 2019.

\bibitem{cheng2022aafl}
J.~Cheng, P.~Luo, N.~Xiong, and J.~Wu, ``Aafl: Asynchronous-adaptive federated learning in edge-based wireless communication systems for countering communicable infectious diseasess,'' \emph{IEEE Journal on Selected Areas in Communications}, vol.~40, no.~11, pp. 3172--3190, 2022.

\bibitem{wang2020tackling}
J.~Wang, Q.~Liu, H.~Liang, G.~Joshi, and H.~V. Poor, ``Tackling the objective inconsistency problem in heterogeneous federated optimization,'' \emph{Advances in neural information processing systems}, vol.~33, pp. 7611--7623, 2020.

\bibitem{yoon2020fedmix}
T.~Yoon, S.~Shin, S.~J. Hwang, and E.~Yang, ``Fedmix: Approximation of mixup under mean augmented federated learning,'' in \emph{International Conference on Learning Representations}, 2020.

\bibitem{zhang2018mixup}
H.~Zhang, M.~Cisse, Y.~N. Dauphin, and D.~Lopez-Paz, ``mixup: Beyond empirical risk minimization,'' in \emph{International Conference on Learning Representations}, 2018.

\bibitem{zhou2022domain}
K.~Zhou, Z.~Liu, Y.~Qiao, T.~Xiang, and C.~C. Loy, ``Domain generalization: A survey,'' \emph{IEEE Transactions on Pattern Analysis and Machine Intelligence}, 2022.

\bibitem{back2022mitigating}
A.~Back~de Luca, G.~Zhang, X.~Chen, and Y.~Yu, ``Mitigating data heterogeneity in federated learning with data augmentation,'' \emph{arXiv e-prints}, pp. arXiv--2206, 2022.

\bibitem{wang2020optimizing}
H.~Wang, Z.~Kaplan, D.~Niu, and B.~Li, ``Optimizing federated learning on non-iid data with reinforcement learning,'' in \emph{IEEE INFOCOM 2020-IEEE Conference on Computer Communications}.\hskip 1em plus 0.5em minus 0.4em\relax IEEE, 2020, pp. 1698--1707.

\bibitem{yang2021federated}
M.~Yang, X.~Wang, H.~Zhu, H.~Wang, and H.~Qian, ``Federated learning with class imbalance reduction,'' in \emph{2021 29th European Signal Processing Conference (EUSIPCO)}.\hskip 1em plus 0.5em minus 0.4em\relax IEEE, 2021, pp. 2174--2178.

\bibitem{zhao2018federated}
Y.~Zhao, M.~Li, L.~Lai, N.~Suda, D.~Civin, and V.~Chandra, ``Federated learning with non-iid data,'' \emph{arXiv preprint arXiv:1806.00582}, 2018.

\bibitem{tuor2021overcoming}
T.~Tuor, S.~Wang, B.~J. Ko, C.~Liu, and K.~K. Leung, ``Overcoming noisy and irrelevant data in federated learning,'' in \emph{2020 25th International Conference on Pattern Recognition (ICPR)}.\hskip 1em plus 0.5em minus 0.4em\relax IEEE, 2021, pp. 5020--5027.

\bibitem{yoshida2019hybrid}
N.~Yoshida, T.~Nishio, M.~Morikura, K.~Yamamoto, and R.~Yonetani, ``Hybrid-fl: Cooperative learning mechanism using non-iid data in wireless networks,'' \emph{arXiv preprint arXiv:1905.07210}, 2019.

\bibitem{fallah2020personalized}
A.~Fallah, A.~Mokhtari, and A.~Ozdaglar, ``Personalized federated learning with theoretical guarantees: A model-agnostic meta-learning approach,'' \emph{Advances in Neural Information Processing Systems}, vol.~33, pp. 3557--3568, 2020.

\bibitem{finn2017model}
C.~Finn, P.~Abbeel, and S.~Levine, ``Model-agnostic meta-learning for fast adaptation of deep networks,'' in \emph{International conference on machine learning}.\hskip 1em plus 0.5em minus 0.4em\relax PMLR, 2017, pp. 1126--1135.

\bibitem{t2020personalized}
C.~T~Dinh, N.~Tran, and J.~Nguyen, ``Personalized federated learning with moreau envelopes,'' \emph{Advances in Neural Information Processing Systems}, vol.~33, pp. 21\,394--21\,405, 2020.

\bibitem{smith2017federated}
V.~Smith, C.-K. Chiang, M.~Sanjabi, and A.~S. Talwalkar, ``Federated multi-task learning,'' \emph{Advances in neural information processing systems}, vol.~30, 2017.

\bibitem{corinzia2019variational}
L.~Corinzia, A.~Beuret, and J.~M. Buhmann, ``Variational federated multi-task learning,'' \emph{arXiv preprint arXiv:1906.06268}, 2019.

\bibitem{chen2018federated}
F.~Chen, M.~Luo, Z.~Dong, Z.~Li, and X.~He, ``Federated meta-learning with fast convergence and efficient communication,'' \emph{arXiv preprint arXiv:1802.07876}, 2018.

\bibitem{zhu2021data}
Z.~Zhu, J.~Hong, and J.~Zhou, ``Data-free knowledge distillation for heterogeneous federated learning,'' in \emph{International Conference on Machine Learning}.\hskip 1em plus 0.5em minus 0.4em\relax PMLR, 2021, pp. 12\,878--12\,889.

\bibitem{lin2020ensemble}
T.~Lin, L.~Kong, S.~U. Stich, and M.~Jaggi, ``Ensemble distillation for robust model fusion in federated learning,'' \emph{Advances in Neural Information Processing Systems}, vol.~33, pp. 2351--2363, 2020.

\bibitem{peng2019federated}
X.~Peng, Z.~Huang, Y.~Zhu, and K.~Saenko, ``Federated adversarial domain adaptation,'' \emph{arXiv preprint arXiv:1911.02054}, 2019.

\bibitem{li2020federated}
T.~Li, A.~K. Sahu, M.~Zaheer, M.~Sanjabi, A.~Talwalkar, and V.~Smith, ``Federated optimization in heterogeneous networks,'' \emph{Proceedings of Machine Learning and Systems}, vol.~2, pp. 429--450, 2020.

\bibitem{liang2019variance}
X.~Liang, S.~Shen, J.~Liu, Z.~Pan, E.~Chen, and Y.~Cheng, ``Variance reduced local sgd with lower communication complexity,'' \emph{arXiv preprint arXiv:1912.12844}, 2019.

\bibitem{karimireddy2020scaffold}
S.~P. Karimireddy, S.~Kale, M.~Mohri, S.~Reddi, S.~Stich, and A.~T. Suresh, ``Scaffold: Stochastic controlled averaging for federated learning,'' in \emph{International Conference on Machine Learning}.\hskip 1em plus 0.5em minus 0.4em\relax PMLR, 2020, pp. 5132--5143.

\bibitem{lecun1998gradient}
Y.~LeCun, L.~Bottou, Y.~Bengio, and P.~Haffner, ``Gradient-based learning applied to document recognition,'' \emph{Proceedings of the IEEE}, vol.~86, no.~11, pp. 2278--2324, 1998.

\bibitem{krizhevsky2009learning}
A.~Krizhevsky, G.~Hinton \emph{et~al.}, ``Learning multiple layers of features from tiny images,'' 2009.

\bibitem{bubeck2015convex}
S.~Bubeck \emph{et~al.}, ``Convex optimization: Algorithms and complexity,'' \emph{Foundations and Trends{\textregistered} in Machine Learning}, vol.~8, no. 3-4, pp. 231--357, 2015.

\end{thebibliography}
\vspace{11pt}
\begin{IEEEbiography}[{\includegraphics[width=1in,height=1.25in,clip,keepaspectratio]{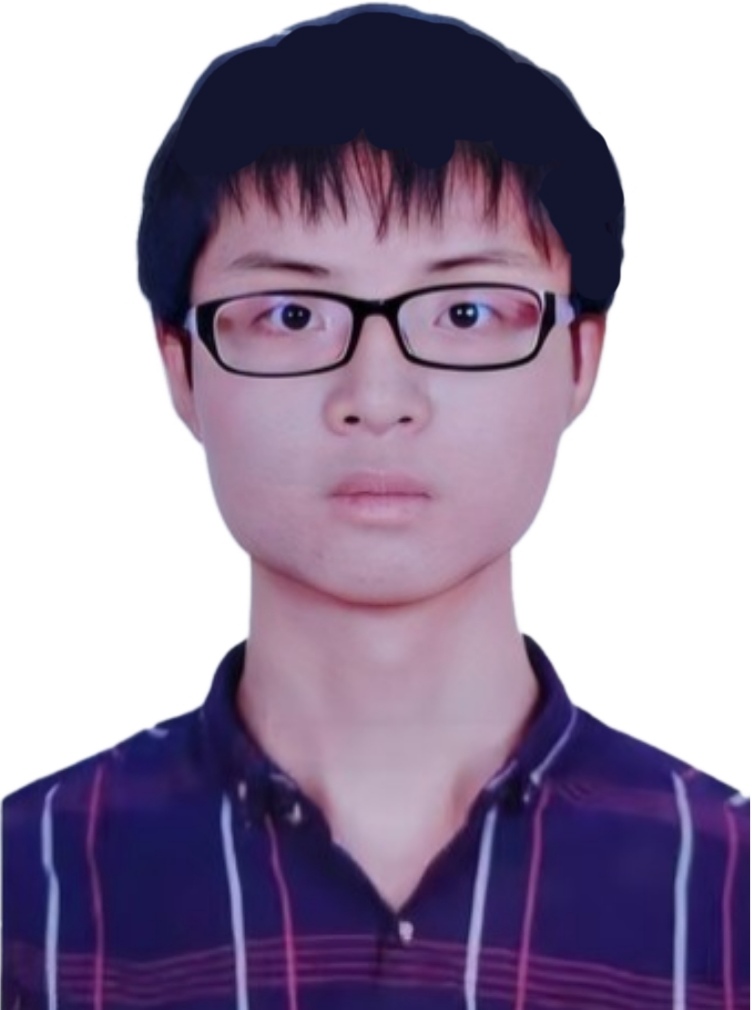}}]{Ping Luo}
	(Student Member, IEEE, 2022) is currently pursuing the Ph.D. degree in Computer Science and Technology from National University of Defense Technology (NUDT), Changsha, China. He received his MA.Eng. degree in Computer Science and Technology with Hainan University, Haikou, China. His research interests include Convex Optimization, the Industrial Internet of Things and Artificial Intelligence.
\end{IEEEbiography}
\vspace{11pt}

\begin{IEEEbiography}[{\includegraphics[width=1in,height=1.25in,clip,keepaspectratio]{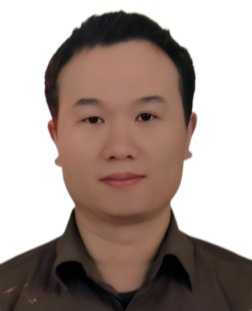}}]{Jieren Cheng}
	(Member, IEEE, 2020) is now a Professor and the Associate Dean of School of Computer Science \& Technology in Hainan University, China. He received his Ph.D. degree in Computer Science and Technology from National University of Defense Technology (NUDT) in 2010. He is awarded as ``Famous South China Sea Scholar". He serves as the director of the Hainan Provincial Blockchain Technology Engineering Research Center. His research interests include Blockchain, Big Data, Cloud Computing, Cyberecurity, Artificial Intelligence and Intelligent Transportation.
\end{IEEEbiography}

\vspace{11pt}
\begin{IEEEbiography}[{\includegraphics[width=1in,height=1.25in,clip,keepaspectratio]{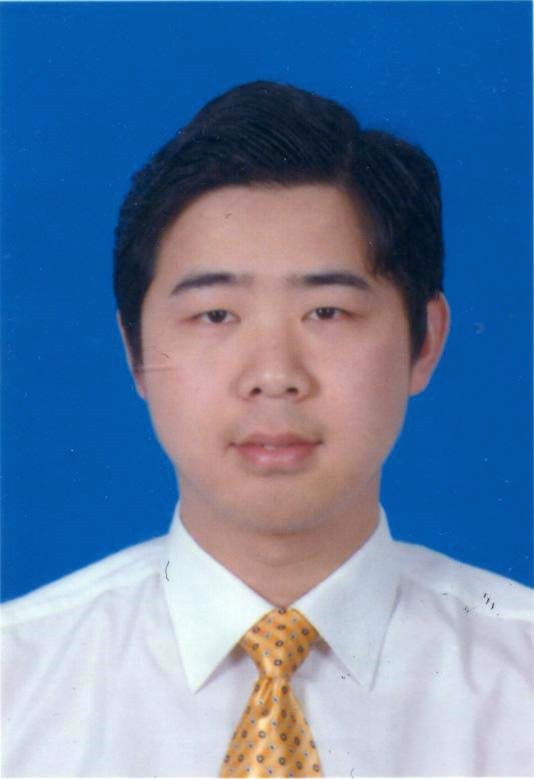}}]{N. Xiong}
	(S’05–M’08–SM’12) is current a Distinguished Professor at National Engineering Research Center for E-Learning, Central China Normal University (CCNU), Wuhan, Hu Bei Province, 430079, China. He is also with the Department of Computer Science, Georgia State University, Atlanta, GA 30302, USA. He received his PhD degree in School of Information Science, Japan Advanced Institute of Science and Technology (JAIST) on March 1, 2008. His research interests include Deep Learning, Reliable Networks, Software Engineering, and Big Data Analytics. 
	
	Dr. Xiong works in CCNU for many years, and obtained many research funding and many industrial projects. He published over 600 journal paper with 200+ IEEE journal papers. He also creates a company about design and analysis for complex reliable software systems, and obtains over 10 patents.
\end{IEEEbiography}

\vspace{11pt}
\begin{IEEEbiography}[{\includegraphics[width=1in,height=1.25in,clip,keepaspectratio]{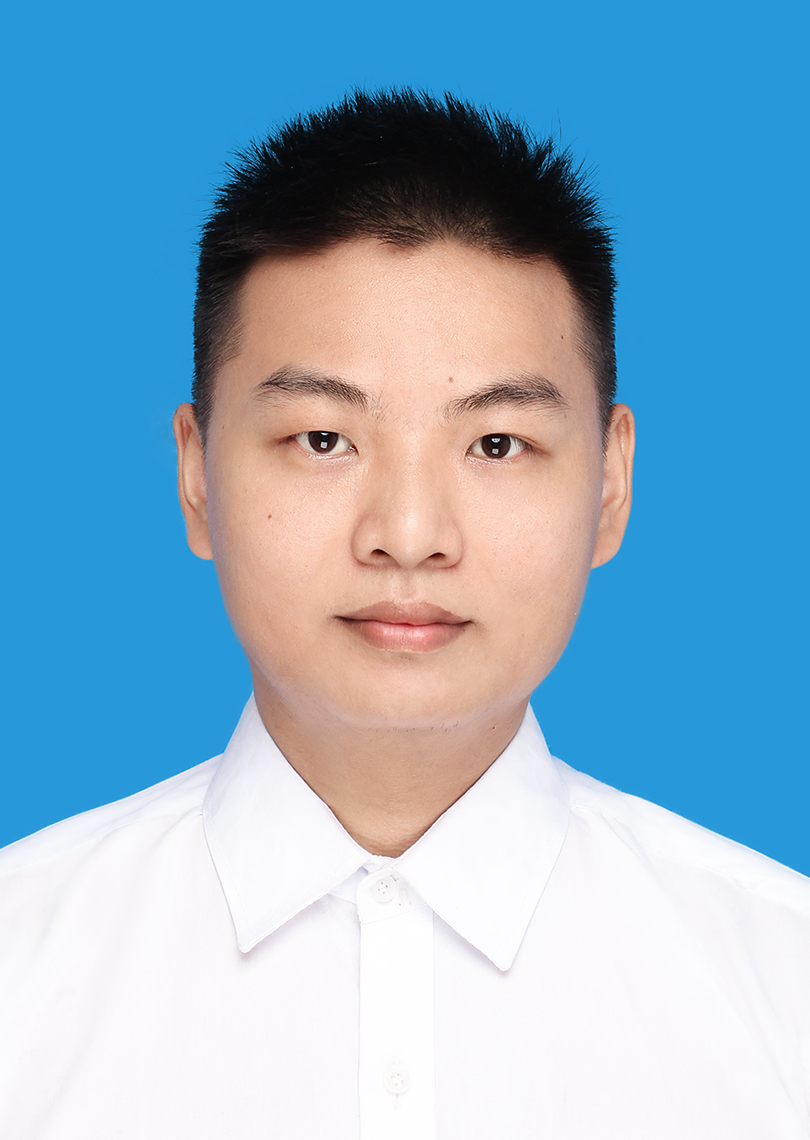}}]{Zhenhao Liu}
	(Student Member, IEEE, 2022) is currently working toward the Ph.D. degree in Agricultural Information Engineering with Huazhong Agricultural University, Wuhan, China. He received his MA.Eng. degree in Electronic Information with Hainan University, Haikou, China. His research interests include Federated Learning, Privacy-preserving, and Artificial Intelligence.
\end{IEEEbiography}
\vspace{11pt}

\begin{IEEEbiography}[{\includegraphics[width=1in,height=1.25in,clip,keepaspectratio]{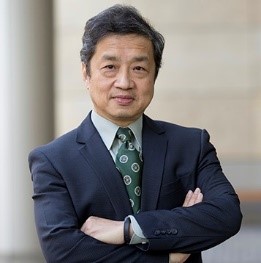}}]{Jie Wu}
	(Fellow, IEEE, 2009) is the Director of the Center for Networked Computing and Laura H. Carnell professor at Temple University. He also serves as the Director of International Affairs at College of Science and Technology. He served as Chair of Department of Computer and Information Sciences from the summer of 2009 to the summer of 2016 and As- sociate Vice Provost for International Affairs from the fall of 2015 to the summer of 2017. Prior to joining Temple University, he was a program director at the National Science Foundation and was a distinguished professor at Florida Atlantic University. His current research interests include mobile computing and wireless networks, rout- ing protocols, network trust and security, distributed algorithms, applied machine learning, and cloud computing. Dr. Wu regularly publishes in scholarly journals, conference proceedings, and books. He serves on several editorial boards, including IEEE Transactions on Mobile Computing, IEEE Transactions on Service Computing, Journal of Parallel and Distributed Computing, and Journal of Computer Science and Technology. Dr. Wu is/was general chair/co-chair for IEEE IPDPS’08, IEEE DCOSS’09, IEEE ICDCS’13, ACM MobiHoc’14, ICPP’16, IEEE CNS’16, WiOpt’21, and ICDCN’22 as well as program chair/cochair for IEEE MASS’04, IEEE INFO- COM’11, CCF CNCC’13, and ICCCN’20. He was an IEEE Computer Society Dis- tinguished Visitor, ACM Distinguished Speaker, and chair for the IEEE Technical Committee on Distributed Processing (TCDP). Dr. Wu is a Fellow of the AAAS and a Fellow of the IEEE. He is the recipient of the 2011 China Computer Federation (CCF) Overseas Outstanding Achievement Award.
\end{IEEEbiography}

\vspace{11pt}

\vfill
\clearpage
{\appendices
\section{Proof of Theorem \ref{theorem-1}}
\label{proof-theo1}
\subsection{Preliminaries}
	Because each global gradient is Lipschitz smooth according to the Assumption \ref{assumption-L}, and we have
\begin{equation}
	f(\boldsymbol{x})-f(\boldsymbol{y})-\nabla f(\boldsymbol{y})^{T}(\boldsymbol{x}-\boldsymbol{y}) \leq \frac{L}{2}\|\boldsymbol{x}-\boldsymbol{y}\|^{2}
\end{equation}
for arbitrary $ \boldsymbol{x} $ and $ \boldsymbol{y} $ \cite{bubeck2015convex}, where $ \nabla f(\boldsymbol{.})^{T} $ is the transpose of $ \nabla f(\boldsymbol{.}) $. Thus, for $ k \in [0,k-1] $, we have
\begin{equation}
	\label{eq-lipschitz}
	\begin{aligned}
		&F\left(\boldsymbol{w}_{k+1}\right)-F\left(\boldsymbol{w}_{k}\right) \\
		\leq&-\nabla F\left(\boldsymbol{w}_{k}\right)\left(\boldsymbol{w}_{k+1}-\boldsymbol{w}_{k}\right)+\frac{L}{2}\left\|\boldsymbol{w}_{k+1}-\boldsymbol{w}_{k}\right\|^{2}.
	\end{aligned}
\end{equation}
Substituting \eqref{eq-fednova} into \eqref{eq-lipschitz} yields
\begin{equation}
	\label{eq-unfold}
	\begin{aligned}
		&F\left(\boldsymbol{w}_{k+1}\right)-F\left(\boldsymbol{w}_{k}\right) \\
		\leq&-\eta \tau_{k}\left\langle\nabla F\left(\boldsymbol{w}_{k}\right), \boldsymbol{d}_{k}\right\rangle +\frac{\eta^{2} \tau_{k}^{2} L}{2}\left\|\boldsymbol{d}_{k}\right\|^{2}.
	\end{aligned}
\end{equation}
We solve for the inner product term of \eqref{eq-unfold} to obtain
\begin{equation}
	\label{eq-inner}
	\begin{aligned}
		&\left\langle \nabla F\left(\boldsymbol{w}_{k}\right), \boldsymbol{d}_{k}\right\rangle \\
		=&\frac{1}{2}\left\|\nabla F\left(\boldsymbol{w}_{k}\right)\right\|^{2}+\frac{1}{2}\left\|\boldsymbol{d}_{k}\right\|^{2} -\frac{1}{2} T_{k},
	\end{aligned}
\end{equation}
where $ T_{k} \triangleq \left\|\nabla F\left(\boldsymbol{w}_{k}\right)-\boldsymbol{d}_{k}\right\|^{2} $ and the equation uses the fact $ 2 \left\langle a, b \right\rangle = \left\| a \right\|^{2} + \left\| b \right\|^{2} - \left\| a -b \right\|^{2} $. Substituting \eqref{eq-inner} into \eqref{eq-unfold} yields
\begin{equation}
	\label{F-ini}
	\begin{aligned}
		&\frac{F\left(\boldsymbol{w}_{k+1}\right)-F\left(\boldsymbol{w}_{k}\right)}{\eta \tau_{k}} \\
		\leq &-\frac{1}{2}\left(\left\|\nabla F\left(\boldsymbol{w}_{k}\right)\right\|^{2}+\left\|\boldsymbol{d}_{k}\right\|^{2}- T_{k} \right) +\frac{\eta \tau_{k} L}{2}\left\|\boldsymbol{d}_{k}\right\|^{2} \\
		=&-\frac{1}{2}\left\|\nabla F\left(\boldsymbol{w}_{k}\right)\right\|^{2}+\frac{\left(\eta \tau_{k} L-1\right)}{2}\left\|\boldsymbol{d}_{k}\right\|^{2} + \frac{1}{2} T_{k}\\
		\leq &(\frac{\eta \tau_{k} L}{2} - 1)\left\|\nabla F\left(\boldsymbol{w}_{k}\right)\right\|^{2}+\frac{1}{2} T_{k}\\
		&\text{(when $ \eta \tau_{k} L-1 \geq 0 $)},
	\end{aligned}
\end{equation}
where the last inequality is because $ \left\|\boldsymbol{d}_{k}\right\|^{2} \leq \left\|\nabla F\left(\boldsymbol{w}_{k}\right)\right\|^{2} $ according to the Assumption \ref{assumption-d}. Next we will convert $ T_{k} $ to contain only the similar items of $ \left\|\nabla F\left(\boldsymbol{w}_{k}\right)\right\|^{2} $.

\subsection{Results for $ T_{k} $}

According to the definition of $ T_{k} $  and $ \boldsymbol{d}_{k} $ for $ i \in [1, N] $ and $ k \in [0, K-1] $, we always have

\begin{equation}
	\label{Tk}
	\begin{aligned}
		T_{k}&=\big\|\nabla F\left(\boldsymbol{w}_{k}\right)-\sum_{i=1}^{N} p_{i} \boldsymbol{G}_{(k, i)}\big\|^{2} \\
		&=\big\|\sum_{i=1}^{N} p_{i}\left(\nabla F_{i}\left(\boldsymbol{w}_{k}\right)-\boldsymbol{G}_{(k, i)}\right)\big\|^{2} \\
		&\text{(from the definition of $ \nabla F\left(\boldsymbol{w}_{k}\right) $ in \eqref{eq-global grad})}\\
		&\leq \sum_{i=1}^{N} p_{i}\left\|\nabla F_{i}\left(\boldsymbol{x}_{k}\right)-\boldsymbol{G}_{(k, i)}\right\|^{2},
	\end{aligned}
\end{equation}
where the last inequality uses Jensen’s Inequality: $ \big\| \sum_{i=1}^{N} \varphi_{i} \boldsymbol{z} \big\|^{2} \leq \sum_{i=1}^{N} \varphi_{i} \left\| \boldsymbol{z} \right\|^{2} $ for arbitrary $ \boldsymbol{z} $ and $ \sum_{i=1}^{N} \varphi_{i} = 1 $. Then, we solve for $ \left\|\nabla F_{i}\left(\boldsymbol{x}_{k}\right)-\boldsymbol{G}_{(k, i)}\right\|^{2} $ and get
\begin{equation}
	\label{assume-beta}
	\begin{aligned}
		&\left\|\nabla F_{i}\left(\boldsymbol{w}_{k}\right)-\boldsymbol{G}_{(k, i)}\right\|^{2}\\
		=&\big\|\nabla F_{i}\left(\boldsymbol{w}_{k}\right)-\frac{1}{\tau_{(k, i)}} \sum_{\lambda=0}^{\tau_{(k, i)}-1} \nabla F_{i}\big(\boldsymbol{w}_{(k, i)}^{\lambda}\big)\big\|^{2} \\
		=&\big\|\sum_{\lambda=0}^{\tau_{(k, i)}-1} \frac{1}{\tau_{(k, i)}}\Big(\nabla F_{i}\left(\boldsymbol{w}_{k}\right)-\nabla F_{i}\big(\boldsymbol{w}_{(k, i)}^{\lambda}\big)\Big)\big\|^{2} \\
		\leq &\sum_{\lambda=0}^{\tau_{(k, i)}-1} \frac{1}{\tau_{(k, i)}}\big\|\nabla F_{i}\left(\boldsymbol{w}_{k}\right)-\nabla F_{i}\big(\boldsymbol{w}_{(k, i)}^{\lambda}\big)\big\|^{2} \\
		&\text{(from Jensen’s Inequality)} \\
		\leq &\sum_{\lambda=0}^{\tau_{(k, i)}-1} \frac{\beta_{(k, i)}^{2}}{\tau_{(k, i)}}\big\|\boldsymbol{w}_{k}-\boldsymbol{w}_{(k, i)}^{\lambda}\big\|^{2}\\
		&\text{(from the Assumption \ref{assumption-beta})}
	\end{aligned}
\end{equation}
for $ \lambda \in [0, \tau_{(k, i)}-1] $. According to our definition in Section \ref{sec-preliminaries}, we have $ \boldsymbol{w}_{k} = \boldsymbol{w}_{(k, i)}^{\lambda} $ at $ \lambda = 0 $, at this point the last term of the inequation in \eqref{assume-beta} has $ \big\|\boldsymbol{w}_{k}-\boldsymbol{w}_{(k, i)}^{0}\big\|^{2} = 0 $. Therefore, we solve for $ \big\|\boldsymbol{w}_{k}-\boldsymbol{w}_{(k, i)}^{\lambda}\big\|^{2} $ with $ \lambda \in [1, \tau_{(k, i)}-1] $ and obtain
\begin{equation}
	\label{assume-delta}
	\begin{aligned}
		&\big\|\boldsymbol{w}_{k}-\boldsymbol{w}_{(k, i)}^{\lambda}\big\|^{2}=\eta^{2}\big\|\sum_{s=0}^{\lambda-1} \nabla F_{i}\big(\boldsymbol{w}_{(k, i)}^{s}\big)\big\|^{2} \\
		&\text{(from the SGD rule in \eqref{local update})}\\
		\leq&\eta^{2} \delta_{(k, i)} \sum_{s=0}^{\lambda-1}\left\|\nabla F\left(\boldsymbol{w}_{k}\right)\right\|^{2} \\
		&\text{(from the Assumption \ref{assumption-delta})}\\
		=&\eta^{2} \delta_{(k, i)} \lambda\left\|\nabla F\left(\boldsymbol{w}_{k}\right)\right\|^{2}.  
	\end{aligned}
\end{equation}
Substituting \eqref{assume-delta} into \eqref{assume-beta}, we get
\begin{equation}
	\label{defi-A}
	\begin{aligned}
		&\left\|\nabla F_{i}\left(\boldsymbol{w}_{k}\right)-\boldsymbol{G}_{(k, i)}\right\|^{2}\\
		\leq &\sum_{\lambda=0}^{\tau_{(k, i)}-1} \lambda \eta^{2} \frac{\beta_{(k, i)}^{2} \delta_{(k, i)}}{\tau_{(k, i)}}\left\|\nabla F\left(\boldsymbol{w}_{k}\right)\right\|^{2}\\
		=& \frac{\tau_{(k, i)}-1}{2} \eta^{2} \beta_{(k, i)}^{2} \delta_{(k, i)} \left\|\nabla F\left(\boldsymbol{w}_{k}\right)\right\|^{2} \\
		=& \frac{\tau_{(k, i)}-1}{2} \eta A_{(k,i)} \left\|\nabla F\left(\boldsymbol{w}_{k}\right)\right\|^{2}\\
		&\text{(from the definition that $ A_{(k,i)} \triangleq \eta \beta_{(k,i)}^{2} \delta_{(k,i)}) $.}
	\end{aligned}
\end{equation}
Substituting \eqref{defi-A} into \eqref{Tk}, we get
\begin{equation}
	\label{Tk-final}
	\begin{aligned}
	T_{k} &\leq \sum_{i=1}^{N} p_{i} \frac{\tau_{(k, i)}-1}{2} \eta A_{(k,i)} \left\|\nabla F\left(\boldsymbol{w}_{k}\right)\right\|^{2}\\
	&=\frac{1}{2} \eta \left\|\nabla F\left(\boldsymbol{w}_{k}\right)\right\|^{2} \sum_{i=1}^{N} p_{i} (\tau_{(k, i)}-1) A_{(k,i)}.
	\end{aligned}
\end{equation}
Now, we are ready to derive the final result.
\subsection{Final Result}
Plugging \eqref{Tk-final} back into \eqref{F-ini}, we have
\begin{equation}
	\label{F-fianl}
	\begin{aligned}
		&\frac{F\left(\boldsymbol{w}_{k+1}\right)-F\left(\boldsymbol{w}_{k}\right)}{\eta \tau_{k}} \\
		\leq &(\frac{\eta \tau_{k} L}{2} - 1)\left\|\nabla F\left(\boldsymbol{w}_{k}\right)\right\|^{2}+ \\
		&\frac{1}{4} \eta \left\|\nabla F\left(\boldsymbol{w}_{k}\right)\right\|^{2}\sum_{i=1}^{N} p_{i} (\tau_{(k, i)}-1) A_{(k,i)}\\
		= & \left\|\nabla F\left(\boldsymbol{w}_{k}\right)\right\|^{2}\big( -1 + \frac{1}{4} \eta (2 \tau_{k} L + \sum_{i=1}^{N} p_{i} (\tau_{(k, i)}-1) A_{(k,i)})\big) \\
		= & \big( -1 + \frac{1}{4} \eta (\sum_{i=1}^{N} p_{i}\tau_{(k, i)} 2 L + \sum_{i=1}^{N} p_{i} (\tau_{(k, i)}-1) A_{(k,i)})\big) \\
		&\left\|\nabla F\left(\boldsymbol{w}_{k}\right)\right\|^{2}\\
		&\text{(from the defition of $ \tau_{k} $ in \eqref{eq-fednova})}\\
		= & \big(\frac{1}{4} \eta \sum_{i=1}^{N} p_{i}\left(\tau_{(k,i)}\left(2 L+A_{(k,i)} \right)-A_{(k,i)}\right) -1 \big) \left\|\nabla F\left(\boldsymbol{w}_{k}\right)\right\|^{2}.
	\end{aligned}
\end{equation}
Since both $\eta \tau_{k}$ and $\left\|\nabla F\left(\boldsymbol{w}_{k}\right)\right\|^{2}$ are larger than 0, we can rearrange \eqref{F-fianl} to get
\begin{equation}
    \begin{aligned}
			&\frac{F\left(\boldsymbol{w}_{k+1}\right)-F\left(\boldsymbol{w}_{k}\right)}{\left\|\nabla F\left(\boldsymbol{w}_{k}\right)\right\|^{2}} \\
			\leq &\eta \tau_{k}\big(\frac{1}{4} \eta \sum_{i=1}^{N} p_{i}\left(\tau_{(k,i)}\left(2 L+A_{(k,i)} \right)-A_{(k,i)}\right) -1 \big)
\end{aligned} 
\end{equation}

\section{Proof of Theorem}
\label{proof-theo2}
From Theorem \ref{theorem-1} we get that $ \eta \tau_{k} L -1 \geq 0 $ which means $ \eta \sum_{i=1}^{N} p_{i} \tau_{(k, i)} L \geq 1 $ in the $ k $-th round ($ k \in [0,k-1] $). Thus, it exists a real number $\varepsilon_{k}$ such that
\begin{equation}
	\frac{1}{4}\left( \varepsilon_{k} + 2 \right) \eta \sum_{i=1}^{N} p_{i} \tau_{(k, i)} L < 1.
\end{equation}
Obviously, $ \varepsilon_{k} < 2 $, because when $ \varepsilon_{k} \geq 2 $, we have a self-contradictory inequality $ \frac{1}{4}\left( \varepsilon_{k} + 2 \right) \eta \sum_{i=1}^{N} p_{i} \tau_{(k, i)} L \geq \eta \sum_{i=1}^{N} p_{i} \tau_{(k, i)} L \geq 1 $. And to ensure the model convergence in Theorem \ref{theorem-1} we have
\begin{equation}
	\label{eq-eps}
	\begin{aligned}
	&\frac{1}{4} \eta \sum_{i=1}^{N} p_{i}\left(\tau_{(k,i)}\left(2 L+A_{(k,i)} \right)-A_{(k,i)}\right) \\
	\leq &\frac{1}{4}\left( \varepsilon_{k} + 2 \right) \eta \sum_{i=1}^{N} p_{i} \tau_{(k, i)} L\\
	= & \frac{1}{4} \eta \sum_{i=1}^{N} p_{i} \tau_{(k, i)} L \left( \varepsilon_{k} + 2 \right).
	\end{aligned}
\end{equation}
Since the $L$-value and $\epsilon_{k}$-value are fixed at the current k-th round, the inequality \eqref{eq-eps} can be used by comparing $\tau_{(k,i)}\left(2 L+A_{(k, i)}\right)-A_{(k, i)}$ and $\tau_{(k,i)} L \left(\epsilon_{k}+2\right)$ on each client, that is
\begin{equation}
	\label{up-eps}
	\begin{aligned}
		\tau_{(k,i)}\left(2 L+A_{(k, i)}\right)-A_{(k, i)} &\leq \tau_{(k,i)} L \left(\epsilon_{k}+2\right)\\
		\tau_{(k,i)}\left(A_{(k, i)}-\epsilon_{k} L\right) &\leq A_{(k, i)} \\
		 \tau_{(k,i)} &\leq \frac{A_{(k, i)}}{A_{(k, i)}-\epsilon_{k} L}\\
        &\text{(when $ A_{(k, i)}-\epsilon_{k} L > 0 $)}.
	\end{aligned}
\end{equation}
Since we set the lower bound of $ \tau_{(k,i)} > 1 $, we have $  \frac{A_{(k, i)}}{A_{(k, i)}-\epsilon_{k} L} > 1 $ which is equivalent to $ \epsilon_{k} > 0 $, and get $ \varepsilon_{k} \in (0,2) $. Then, we define the existence of another real number $ \alpha_{k} $ that
\begin{equation}
	\label{eps2alp}
	\epsilon_{k} L=\alpha_{k} \min _{i} A_{(k, i)}.
\end{equation}
We know that $ \alpha_{k} = \frac{\epsilon_{k} L}{\min _{i} A_{(k, i)}} $, thus $ \alpha_{k} \in (0, \frac{2 L}{\min _{i} A_{(k, i)}}) $. To ensure that $ A_{(k, i)}-\epsilon_{k} L > 0  $ is established, we impose further constraints on $ \epsilon_{k} $ that $ \alpha_{k} \in (0, \frac{2L}{\min_{i} A_{(k,i)}}) $ (when $ \frac{2L}{\min_{i} A_{(k,i)}} < 1 $) and $ \alpha_{k} \in (0,1) $ (when $ \frac{2L}{\min_{i} A_{(k,i)}} > 1 $). Substituting \eqref{up-eps} into \eqref{eps2alp}, we get
\begin{equation}
	 \tau_{(k,i)} \leq \frac{A_{(k, i)}}{A_{(k, i)}-\alpha_{k} \min _{i} A_{(k, i)}}.
\end{equation}
}
\end{document}